%% file: main.tex
\documentclass[11pt]{article}
\usepackage[preprint]{acl}
\usepackage{times}
\usepackage{latexsym}
\usepackage[T1]{fontenc}
\usepackage[utf8]{inputenc}
\usepackage{microtype}
\usepackage{inconsolata}
\usepackage{graphicx}
\usepackage{booktabs}
\usepackage{amsmath}
\usepackage{multirow}
\usepackage[table]{xcolor}
\usepackage{amssymb}
\usepackage[most]{tcolorbox}
\usepackage{stfloats}
\usepackage{amsthm}
\newtheorem{proposition}{Proposition}
\newtheorem{corollary}{Corollary}

\def \name{\textsc{LookBack}}
\newcommand{\ta}[1]{\textbf{#1}}
\newcommand{\tb}[1]{\underline{#1}}
\newcommand{\tc}[1]{\text{#1}}
\newcommand{\ca}[1]{\cellcolor{gray!20}\textbf{#1}}
\newcommand{\cb}[1]{\cellcolor{gray!20}\underline{#1}}
\newcommand{\cc}[1]{\cellcolor{gray!20}\text{#1}}

\title{\name{}: Where and How to Score \\ LVLM Responses via Visual Reference Usage}

\author{
    Beomsik Cho\textsuperscript{*} \quad
    Jinhyeong Kim\textsuperscript{*} \quad
    Dongseok Lee \quad
    Jaehyung Kim \\
    Yonsei University \\
    \texttt{\{bscho333, mjmps0726, ehdtjr1220, jaehyungk\}@yonsei.ac.kr}
}

\begin{document}
\maketitle

\begingroup
\renewcommand{\thefootnote}{*}
\footnotetext{Equal contribution.}
\endgroup

\input{Tex/Abstract}
\input{Tex/Introduction}
\input{Tex/Related}
\input{Tex/Motivation}
\input{Tex/Method}
\input{Tex/Experiments}
\input{Tex/Conclusion}

\bibliography{custom}
\input{Tex/Appendix}

\end{document}

%% file: Tex/Abstract.tex
\begin{abstract}
Large Vision-Language Models (LVLMs) integrate visual perception with language generation, enabling responses that span image understanding and complex reasoning.
However, LVLMs do not just inherit the text-level hallucinations; they also hallucinate against the image, producing fluent responses ungrounded in what they see.
This makes LVLM response scoring inherently harder, and our diagnostics show that existing confidence-based metrics adopted from LLMs are insufficient for LVLMs.
Specifically, removing the input image barely changes confidence-based selection, suggesting that output-space confidence primarily captures textual plausibility rather than agreement with the image.
To address this gap, we propose \name{}, a training-free LVLM response scoring method that augments token likelihood with visual lookback score, a lightweight measure of how strongly each response token refers to image tokens.
Across four benchmarks and three models, \name{} consistently improves Best-of-$N$ selection over existing baselines with negligible additional overhead.
\footnote{Code: \url{https://github.com/bscho333/LookBack}.}
\end{abstract}

%% file: Tex/Introduction.tex
\section{Introduction}

With the recent success of Large Language Models (LLMs) \citep{touvron2023llama,achiam2023gpt,team2023gemini}, Large Vision-Language Models (LVLMs) have integrated visual understanding with text generation, demonstrating strong capability from image perception to complex reasoning~\citep{dai2023instructblipgeneralpurposevisionlanguagemodels,liu2024improved,zhu2023minigpt,bai2025qwen2,zhu2025internvl3}.
Despite these advances, LVLMs inherit a fundamental challenge of autoregressive generation: they can produce responses that are fluent and plausible yet factually incorrect.
In multimodal settings, this challenge is further complicated by visual hallucination, where a response asserts objects, attributes, or relations that are not supported by the image~\citep{rohrbach2018chair,li2023pope,leng2024vcd,guan2024hallusionbench}. 
Consequently, reliably identifying which responses are faithful to the image becomes an important problem for LVLMs.

A common setting for improving reliability is \textit{Best-of-N} selection, originating from the LLM literature: multiple candidate responses are sampled, and the best one is selected~\citep{wang2023self,gui2024bonbon,snell2024scaling}. 
The core of Best-of-$N$ is the selection criterion that scores each candidate, and a prominent way is to train reward models or verifiers to assess response quality~\citep{cobbe2021training, lightman2024let}. 
This idea has recently been extended to LVLMs through multimodal reward models that evaluate the visual correctness of candidates~\citep{wang2025visualprm, chen2025vrprm}. 
While effective, these methods require additional models, task-specific supervision, or preference annotations, which limits their applicability.

\input{Figures/motivation_overall}

A natural auxiliary-free direction is the model's own output-space confidence, \textit{i.e.}, how plausible a response is under its output distribution. In the LLM literature, such confidence signals have proven a useful selection criterion that requires no external verifier~\citep{wang2024chain,kang2025scalable}.
This makes the model's own output distribution an appealing candidate for scoring LVLM responses, as it requires no auxiliary model.
However, LVLM response scoring differs from its text-only counterpart in one crucial respect: response quality depends on an additional source of evidence, \emph{the image}.
Through a diagnostic analysis, we find that confidence-based selection remains nearly as effective even when the confidence is computed without conditioning on the image.
This reveals a gap between image-conditioning and image-sensitivity:
the confidence score reflects a response's textual plausibility rather than its agreement with the image.
Since visual hallucinations arise from a mismatch between the response and the image, such an image-insensitive signal cannot reliably detect them.

To close this gap, we incorporate an attention-based measure of \emph{visual reference usage} that captures how strongly each generation step attends back to vision tokens.
Specifically, we propose \name{}, a training-free LVLM response scoring method that calibrates output-space confidence using visual lookback scores.
At the token level, \name{} combines each token's likelihood with its visual lookback score to form a \textit{lookback calibrated token score}. 
At the response level, it aggregates these token scores under a visual relevance distribution, giving greater weight to tokens with stronger visual reference usage.
This distinction is crucial because LVLM responses contain both visually diagnostic content words and generic fluency words; lookback calibrated token scoring captures how strongly each token prediction references the visual input, while response-level aggregation determines how much visually relevant words contribute to the \name{}.

As a result, \name{} favors responses whose high-confidence tokens are strongly tied to the visual input, without requiring external verifiers, additional training, or extra inference passes.
We evaluate \name{} in Best-of-$N$ selection across four benchmarks that require visual understanding and three representative LVLMs: LLaVA-1.5-7B~\citep{liu2024improved}, Qwen2.5-VL-7B~\citep{bai2025qwen2}, and InternVL3-8B~\citep{zhu2025internvl3}. Across these settings, \name{} achieves the highest model-wise average performance for all three LVLMs and generally improves over both linguistic- and vision-side baselines.

%% file: Figures/motivation_overall.tex
\begin{figure*}[t]
    \includegraphics[width=\linewidth]{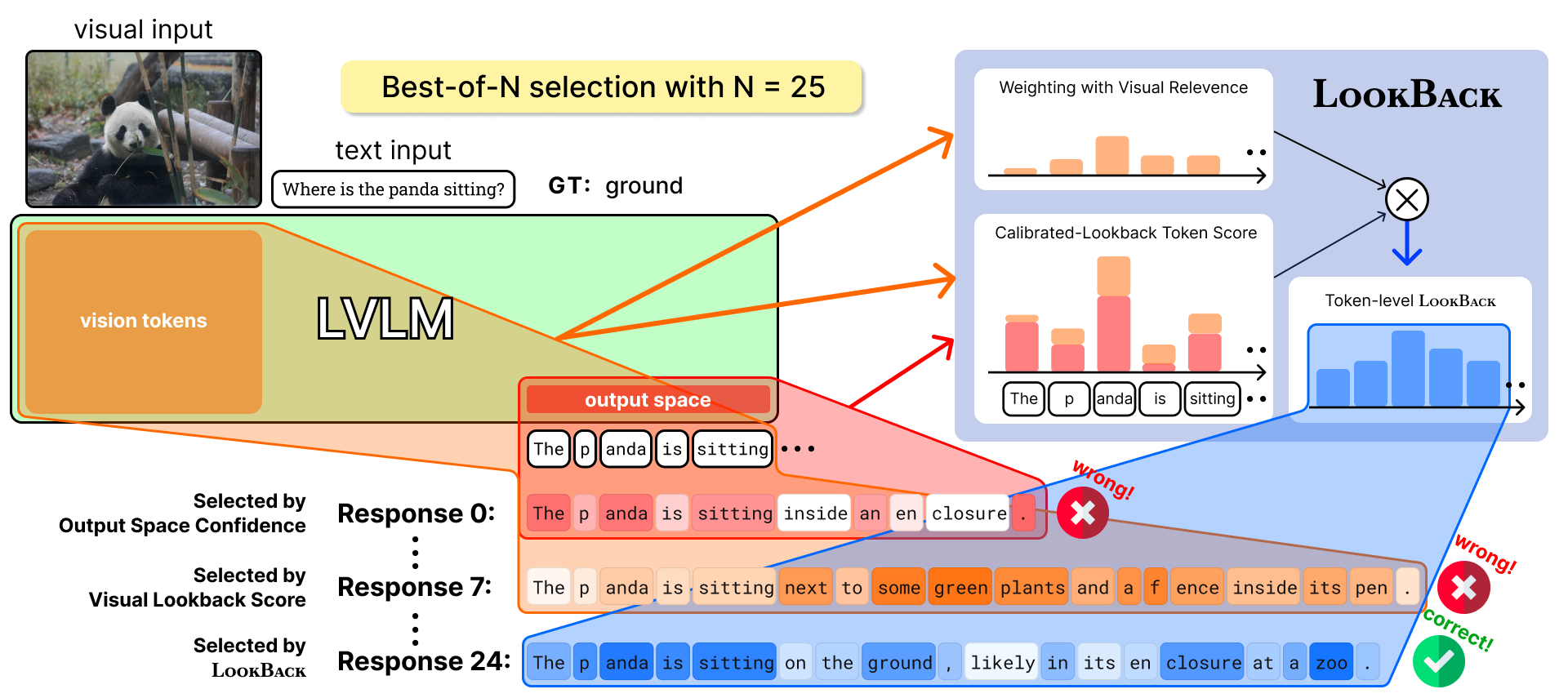}
    \caption{\textbf{Overview of \name{}: combining output-space confidence with visual reference usage.}
    Given an image and a question, an LVLM generates multiple candidate responses and our goal is to select the proper one.
Output-space confidence can assign high scores to fluent but visually unsupported tokens, while visual reference usage highlights tokens that directly look back to the image but does not by itself ensure response correctness.
\name{} (Ours) combines these two signals at the token level by calibrating token confidence with visual reference usage, and aggregates token scores at the response level with visual relevance weights.
    }
    \label{fig:motivation_overall}
\end{figure*}

%% file: Tex/Related.tex
\section{Related Works}

\paragraph{Output-space response scoring in LLMs.}
Stochastic decoding in LLMs yields multiple candidate responses that may follow different inference paths~\citep{kadavath2022language, qiu2024semantic}, making \textit{Best-of-$N$} selection a natural strategy to improve the performance and reliability~\citep{gui2024bonbon, snell2024scaling}.
Existing methods typically score candidates from the generated output alone, using answer consistency~\citep{wang2023self}, external reward models or verifiers~\citep{cobbe2021training, uesato2022solving, lightman2024let, wang2024math}, or training-free model-intrinsic signals~\citep{wang2024chain, kang2025scalable, lee2025training, gwak2025revisiting}.
However, the output-space score is less reliable in LVLMs, since fluent and confident responses can still contain visual claims unsupported by the image.
\name{} retains the lightweight, model-intrinsic nature of such scores while incorporating visual reference usage.

\paragraph{Response scoring in LVLMs.}
Unlike LLM response scoring, LVLM response scoring must evaluate both linguistic plausibility and visual faithfulness simultaneously.
Prior work has addressed this challenge using external multimodal evaluators or reward models, such as VLM-as-a-judge models, multimodal critics, and process reward models~\citep{lee2024prometheus, xiong2025llava, wang2025visualprm, chen2025vrprm}.
Other approaches estimate image--response alignment through cross-modal matching or calibrated visual constraints~\citep{hessel2021clipscore, zhou2024calibrated}, or analyze LVLM reliability through uncertainty, hallucination detection, and visual-grounding signals~\citep{leng2024vcd, park2026vauq}.
While these works emphasize the need for visual evidence, they often introduce external models, additional training, or extra scoring passes.
In contrast, \name{} scores responses using token likelihood and visual lookback score available from the LVLM's internal generation statistics, providing a training-free and efficient alternative for LVLM response selection.

\paragraph{Decoding-time hallucination mitigation.}
Prior decoding-time methods mitigate visual hallucination by modifying next token predictions through visual contrast, attention intervention, or visual guidance~\citep{leng2024vcd,liu2024pai,huo2024sid,cho2026revisit}. \name{} instead leaves generation unchanged and scores already generated responses. Since Best-of-$N$ setup is agnostic with decoding strategies, the scorer can complement these methods.

%% file: Tex/Motivation.tex
\section{Motivation}
\label{sec:motivation}

To identify a reliable scoring function for LVLM response, we begin with output-space confidence, as it has shown consistent effectiveness in LLM scoring across inference-time selection~\citep{kang2025scalable} and training-time reward~\citep{zhao2025learning}.
Our investigation asks \textit{whether this signal remains reliable when response quality depends not only on linguistic plausibility, but also on visual input.}

We instantiate this investigation in a Best-of-$N$ (BoN) response selection setting, where multiple candidate responses are generated for the same input and then ranked by a scoring function.
Our analysis covers both discriminative and generative LVLM responses on MS-COCO~\citep{lin2014mscoco} dataset: VQAv2~\citep{goyal2017vqav2} for visual question answering and CHAIR~\citep{rohrbach2018chair} for object hallucination evaluation.
For each benchmark, we randomly sample 1,000 instances and generate $N=25$ candidate responses with LLaVA-1.5-7B~\citep{liu2024improved} and Qwen2.5-VL-7B~\citep{bai2025qwen2}.
As a representative output-space confidence scorer, we adopt Self-Certainty (SC)~\citep{kang2025scalable}, a state-of-the-art BoN scoring method based on the token-averaged KL divergence between the model's predicted distribution and a uniform distribution.

\subsection{Is Output-Space Confidence Sensitive to Visual Input?}
\label{sec:motivation_1}

We first examine whether SC is affected by the visual input by comparing its score distributions under image-present and image-absent conditions.
For each generated response, we compute SC under two conditions: with the input image (SC w/ image) and without the input image (SC w/o image).
Figure~\ref{fig:motivation_output_dist} shows that across all model--benchmark combinations, the SC distributions under the two conditions are nearly identical in shape and mean, suggesting that SC scores are largely unaffected by the removal of the image at the population level.

\input{Figures/motivation_output_dist}

To further assess this weak image sensitivity at the selection level, we measure the \emph{top-1 agreement ratio}: the fraction of instances for which SC w/ image and SC w/o image select the same top-ranked candidate from $N=25$, directly reflecting whether the input image influences Best-of-$N$ selection under SC.
The solid bars in Figure~\ref{fig:motivation_agreement} report this agreement. 
If the image were a critical determinant of response quality, we would expect the two scorers to select different top-1 candidates.
However, we observe agreement ratios of 0.36--0.64, far above the random baseline of $1/N=0.04$, indicating that removing the image barely changes which response SC selects.
{
This trend persists when the image is replaced by a randomly sampled one, indicating that it stems from the scoring function itself rather than from the degenerate image-free input or from dataset-specific answer priors; see Appendix~\ref{appendix:random_image} for details.
}

These results reveal a gap between \emph{image-conditioning} and \emph{image-sensitivity}: although LVLMs are conditioned on the image during generation, their output-space confidence can remain largely unchanged even without the image.
Thus, output-space confidence alone cannot reliably indicate whether a response is grounded in the image.

\input{Figures/motivation_agreement}

\subsection{Visual Lookback Complements Token-Level Confidence}
\label{sec:motivation_2}

The weak image-sensitivity of output-space confidence raises a natural question: \textit{what model-internal signal can reflect whether each generation step consults the visual input?}

Unlike text-only LLM, LVLMs receive the image as explicit vision tokens in the input context.
Although visual information may be propagated implicitly through image-conditioned text representations and previous output states, the original vision tokens remain available in the causal context throughout decoding. 
Namely, each response-token prediction can therefore directly \emph{look back} to the visual input.
We quantify this behavior by the fraction of attention directed from each output token to the vision tokens.
We call this token-level quantity the \emph{visual lookback score} $A_t$, which serves as a lightweight proxy for \emph{visual reference usage}; its formal definition is given in Section~\ref{sec:method_1}.

\input{Figures/motivation_pos}

To examine whether this signal indeed captures a token-level visual reference, we analyze how it varies across different types of generated words.
If visual lookback score captures how much a generation step refers back to visual evidence, it should be higher for tokens that express visually referential content than for tokens that primarily serve grammatical or discourse functions. 

To this end, we conduct a token-level Part-of-Speech (POS) analysis on the same generated responses from Section~\ref{sec:motivation_1}.
Generated words are tagged with spaCy and grouped into two sets following prior works~\citep{chen2024halc, dong2025inter, min2025mitigating}: a \emph{visual set}, containing words more likely to express visually referential content---nouns, proper nouns, adjectives, and numerals---and a \emph{textual set}, containing words that primarily serve grammatical or discourse roles, such as auxiliaries, determiners, pronouns, and conjunctions.
Over the same word groups, we also analyze SC to test whether output-space confidence favors visually referential words despite its weak image sensitivity at the response-selection level.
For words split into multiple subtokens, we use the mean subtoken score as the word-level score, and independently z-score normalize each score dimension over all analyzed output tokens before word-level aggregation.

As shown in Figure~\ref{fig:motivation_pos}, across all model-benchmark combinations, words in the visual set exhibit above-average visual lookback score but below-average SC, whereas words in the textual set show the opposite pattern.
This opposing pattern suggests that visual lookback score captures a token-level tendency complementary to SC: SC is higher for linguistically predictable words, whereas visual lookback score is higher for words that are more visually referential.
A fine-grained per-POS breakdown in Figure~\ref{fig:appendix_pos} further shows that this trend is not driven by a single POS category, but broadly appears across content-oriented categories.

This token-level complementarity also appears at the response level.
The hatched bars in Figure~\ref{fig:motivation_agreement} show that SC and visual lookback score rarely select the same top-ranked response, with the lowest agreement ratio reaching 0.01, below the random baseline of $0.04$.
This indicates that the two signals induce substantially different response rankings.

Figure~\ref{fig:motivation_overall} illustrates this behavior qualitatively.
SC assigns high scores to fluent, linguistically predictable tokens including function words and assertion markers, whereas visual lookback score highlights object names, attributes, actions, and spatial relations that require direct consultation of the image.
This divergence is particularly pronounced when the model generates confident but visually unsupported assertions: confidence rewards linguistic fluency regardless of visual grounding, whereas visual lookback score reflects whether the prediction step directly attended to the visual input.
	
Overall, these observations suggest that output-space confidence and visual lookback score capture complementary aspects of LVLM generation.
Output-space confidence is useful for identifying linguistically likely responses, but it may remain high even when a response is driven primarily by language priors rather than visual evidence.
Visual lookback score captures whether generation steps attend back to the explicit vision tokens, but high visual attention alone does not guarantee correctness. 
Therefore, a visually grounded response score should combine token-level confidence with token-level visual lookback score, while placing greater emphasis on response positions where visual evidence is most relevant.

%% file: Figures/motivation_output_dist.tex
\begin{figure}[t]
    \includegraphics[width=\linewidth]{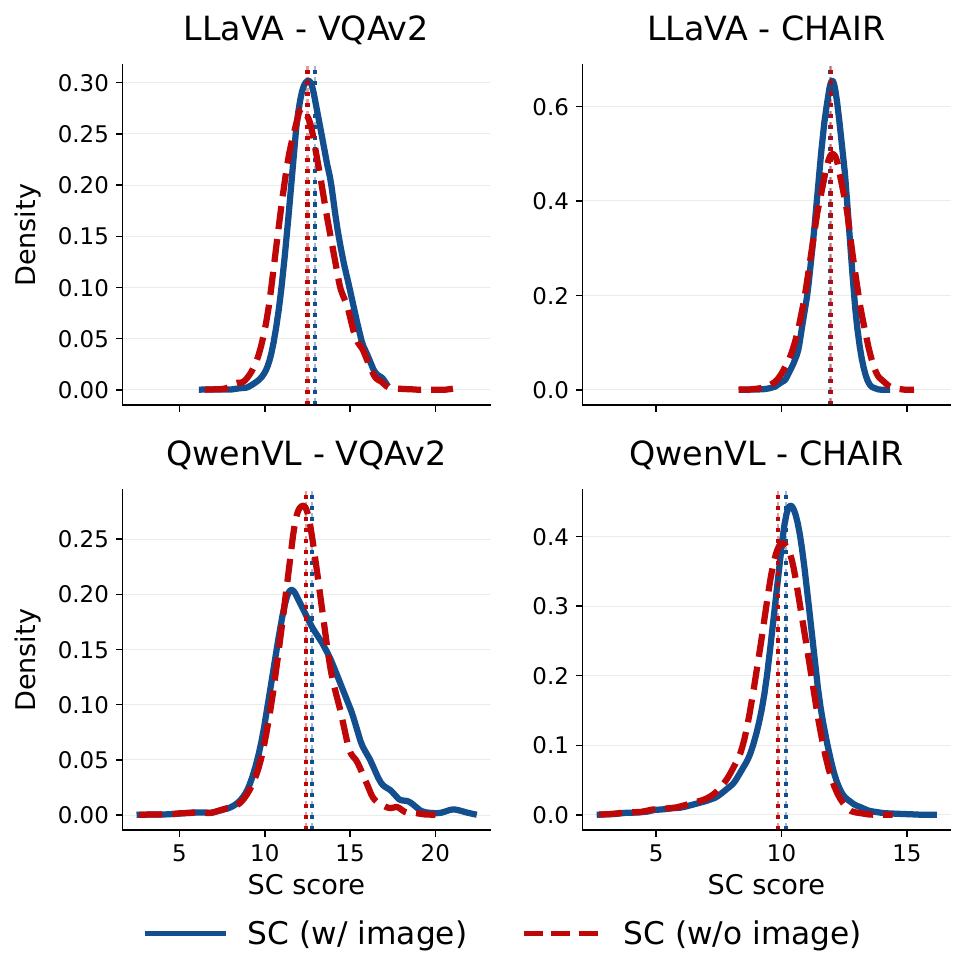}
    \caption{\textbf{SC score distributions remain similar with and without the image.}
    SC score distributions under image-present and image-removed conditions highly overlap, with closely aligned means across setups.
    }
    \label{fig:motivation_output_dist}
\end{figure}

%% file: Figures/motivation_agreement.tex
\begin{figure}[t]
    \includegraphics[width=\linewidth]{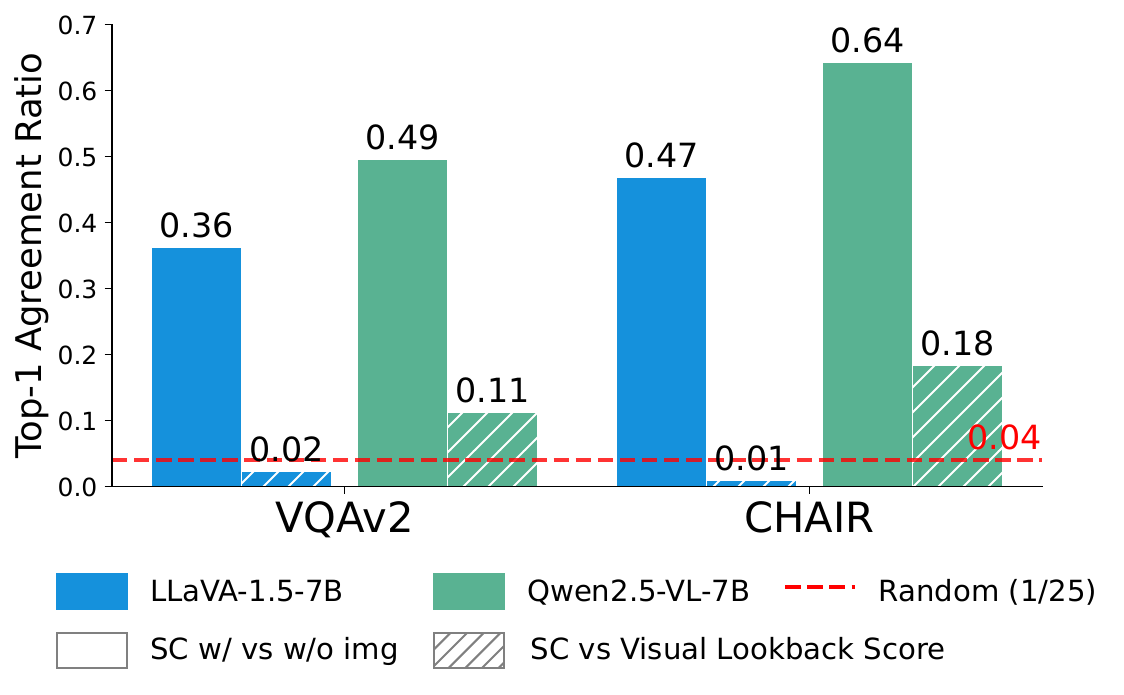}
    \caption{\textbf{Top-1 agreement with image-conditioned SC: image-removal consistency and visual-lookback complementarity.}
    We report the fraction of instances in which two scorers select the same top-ranked response from $N=25$ candidates.
    }
    \label{fig:motivation_agreement}
\end{figure}

%% file: Figures/motivation_pos.tex
\begin{figure}[t]
    \includegraphics[width=\linewidth]{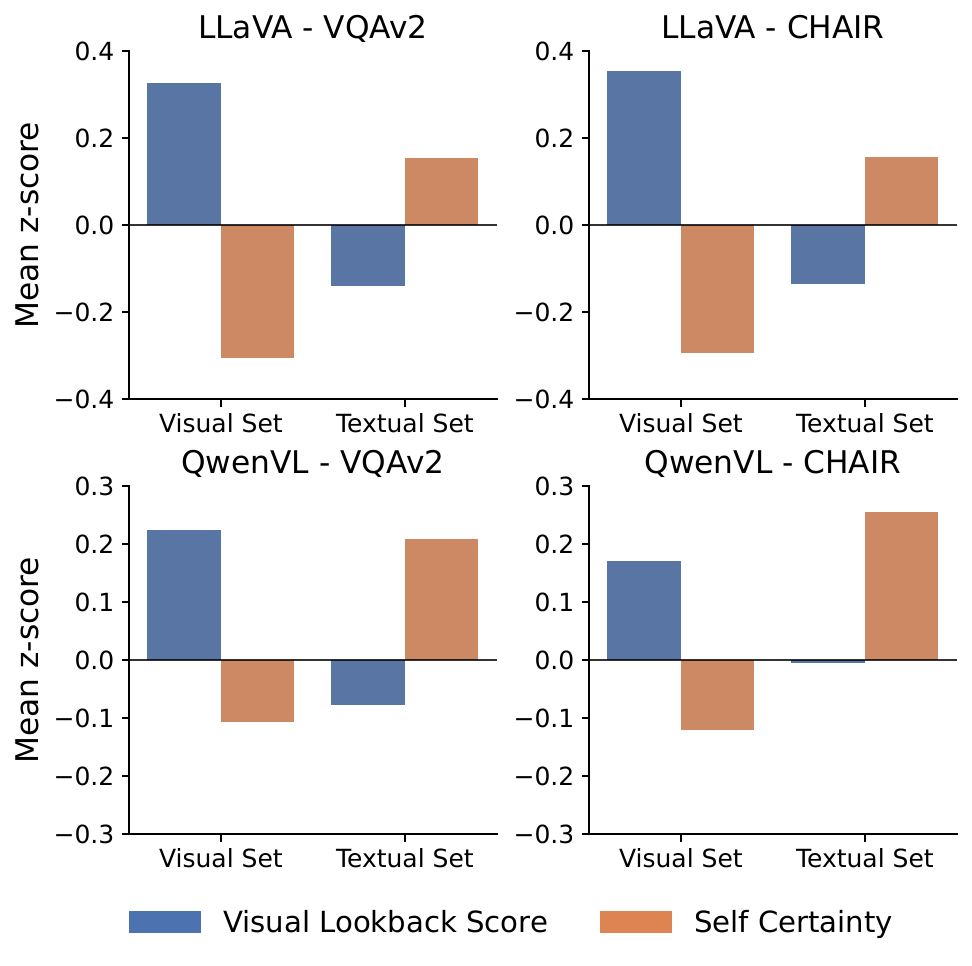}
    \caption{\textbf{Token-level POS analysis of visual lookback score and Self-Certainty.}
    We group generated words into a visual set and a textual set, and report the mean z-score of each signal within each group.
    Visual lookback scores are higher for the visual set, whereas Self-Certainty is higher for the textual set.
    }
    \label{fig:motivation_pos}
\end{figure}

%% file: Tex/Method.tex
\section{Method}
\label{sec:method}

Based on the observations in Section~\ref{sec:motivation}, our goal is to define a response score that converts output-space confidence into vision-aware confidence using only model-internal signals, without extra inference passes.
Specifically, we design the score according to following objectives:
\begin{enumerate}
    \item[O1.] \textbf{Preserve language confidence.}
    The score should retain token-level model confidence, so that implausible or low-probability responses are not favored.
    \item[O2.] \textbf{Incorporate visual reference usage.}
    Tokens that are likely under the output distribution should be further assessed by how strongly they interact with the visual reference.
    \item[O3.] \textbf{Aggregate according to visual relevance.}
    Since not all output tokens are equally informative for visual grounding, visually relevant token positions should contribute more.
\end{enumerate}

\paragraph{Preliminary.}

Let $p_\theta$ denote an LVLM parameterized by $\theta$. Given a text query $\mathbf{x}$ and vision tokens $\mathbf{v}$ encoded from an input image $\mathcal{I}$, the model generates a response $\mathbf{y}=(y_1,\ldots,y_{T})$ autoregressively:
\begin{equation}
\label{eq:preliminary_generation}
    p_\theta(\mathbf{y}\mid \mathbf{x},\mathbf{v})
    =
    \prod_{t=1}^{T}
    p_\theta(y_t \mid \mathbf{x},\mathbf{v},y_{<t}),
\end{equation}
and we denote the per-token probability as $p_t = p_\theta(y_t \mid \mathbf{x},\mathbf{v},y_{<t})$.
Under stochastic decoding, we sample $N$ candidate responses $\{\mathbf{y}^{(i)}\}_{i=1}^{N}$ for the same input $(\mathbf{x},\mathbf{v})$, and select the best candidate under a training-free score $S$:
\begin{equation}
\label{eq:preliminary_bon}
    \mathbf{y}^{*} = \arg\max_{i} \, S(\mathbf{y}^{(i)}\mid\mathbf{x},\mathbf{v}).
\end{equation}

\subsection{Visual Lookback Score}
\label{sec:method_1}

For each generated token \(y_t\), we estimate its \emph{visual lookback score}: how strongly the prediction step attends back to the original vision tokens.
Let $C_t$ denote the full causal context at the prediction step for $y_t$,
comprising text query tokens, vision tokens, and previous output tokens.
Let $\mathcal{P}_v \subset C_t$ denote the positions corresponding to vision tokens~$\mathbf{v}$.
Let $a_{t,k}^{(\ell,h)}$ be the attention weight from the query position predicting $y_t$
to context position $k$ at layer $\ell$ and head $h$.
We define the token-level \emph{visual lookback score} $A_t$ as:
\begin{equation}
\label{eq:visual_lookback_score}
    A_t
    =
    \frac{1}{LH}
    \sum_{\ell=1}^{L}
    \sum_{h=1}^{H}
    \frac{
    \sum_{k \in \mathcal{P}_v} a_{t,k}^{(\ell,h)}
    }{
    \sum_{k \in C_t} a_{t,k}^{(\ell,h)}
    },
\end{equation}
where $L$ and $H$ denote the number of layers and attention heads, respectively.
Since $C_t$ is the full causal context, the denominator equals one under softmax normalization; we retain the ratio form to make explicit that $A_t$ measures the fraction of attention directed to vision tokens. 
As $A_t$ is obtained from the attention weights produced during the generation forward pass, it does not require additional inference or external model.
A larger $A_t$ indicates greater \emph{visual reference usage}, meaning that the prediction step allocates a larger fraction of its attention to the explicit vision tokens.

\subsection{Lookback-Calibrated Token Score}
\label{sec:method_2}

Since $p_t$ and $A_t$ capture complementary aspects of token generation, we define a token-level score that preserves output-space confidence while accounting for visual lookback score. 
We define the lookback-calibrated token score $u_t$ as
\begin{equation}
\label{eq:lookback-calibrated_token_score}
    u_t = \log (p_t) + \alpha \log (A_t),
\end{equation}
equivalently written as $u_t = \log(p_t \cdot A_t^\alpha)$. 
The hyperparameter $\alpha$ controls the strength of this lookback calibration.
When $\alpha=0$, $u_t$ reduces to pure output-space confidence $\log p_t$~(O1).
When $\alpha>0$, $u_t$ is larger for tokens that are both highly probable and strongly attend to vision tokens, and smaller for tokens whose confidence is weakly supported by the visual reference~(O1, O2).

\subsection{Weighting Tokens with Visual Relevance}
\label{sec:method_3}
To obtain a response-level score, we aggregate token-level scores $\{u_t\}_{t=1}^T$ under a weight distribution $q$ over output token positions:
\begin{equation}
\label{eq:general_response_score}
    S(\mathbf{y}\mid\mathbf{x},\mathbf{v}) := \mathbb{E}_{t \sim q} \big[u_t\big] = \sum_{t=1}^{T} q(t) \, u_t.
\end{equation}
Uniform averaging corresponds to $q(t)=\tfrac{1}{T}$, but it assumes every token is equally informative, which is often not the case. 
In visually grounded responses, tokens such as object names, attributes, counts, and relations are generally more diagnostic than function words or generic phrases. 
The aggregation distribution should therefore concentrate on positions that interact more strongly with the visual reference, motivating $q$ derived from $A_t$.

We define \(q_\lambda\) as a visual relevance distribution over output tokens, derived as the solution to an entropy-regularized relevance maximization:
\begin{equation}
\label{eq:entropy_regularized_relevance}
    q_\lambda
    =
    \arg\max_{q \in \Delta_T}
    \Big[
    \lambda\,
    \mathbb{E}_{t \sim q}\big[\log (A_t)\big]
    +
    H(q)
    \Big],
\end{equation}
where $\Delta_T$ is the probability simplex over $T$ output positions, and $H(q)=-\sum_t q(t)\log \big(q(t)\big)$ is the entropy of $q$.
The first term encourages $q$ to concentrate on positions with high visual lookback score, while the entropy term keeps the distribution smooth and prevents hard selection. 
The hyperparameter $\lambda$ controls how sharply $q$ concentrates on high-$A_t$ positions.

\begin{proposition}
\label{prop:q_lambda}
    The solution to the entropy-regularized relevance maximization is
    \begin{equation}
    \label{eq:qlambda_closed_form}
        q_\lambda(t)
        =
        \frac{A_t^\lambda}{\sum_{j=1}^{T} A_j^\lambda}.
    \end{equation}
\end{proposition}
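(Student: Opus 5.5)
The plan is to use the Gibbs variational principle, in its KL-divergence form, which gives existence, uniqueness and the closed form together. Write $s_t = \lambda \log(A_t)$ and define the candidate $q^\star(t) = e^{s_t}/Z$ with $Z=\sum_{j=1}^T e^{s_j} = \sum_{j=1}^T A_j^\lambda$. This is well defined because $A_t>0$: each $A_t$ is an average of softmax attention fractions, which are strictly positive. For any $q\in\Delta_T$, expand the KL divergence to $q^\star$:
\begin{equation*}
\mathrm{KL}(q\,\|\,q^\star)=\sum_t q(t)\log q(t) - \sum_t q(t)\,s_t + \log Z .
\end{equation*}
This rearranges to
\begin{equation*}
\lambda\,\mathbb{E}_{t\sim q}[\log A_t] + H(q) = \log Z - \mathrm{KL}(q\,\|\,q^\star).
\end{equation*}

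The objective in Eq.~\eqref{eq:entropy_regularized_relevance} therefore equals a constant, $\log Z$, which does not depend on $q$, minus a KL divergence. By Gibbs' inequality, $\mathrm{KL}(q\,\|\,q^\star)\ge 0$, with equality if and only if $q=q^\star$. So the maximum value $\log Z$ is attained uniquely at $q^\star(t)=A_t^\lambda/\sum_j A_j^\lambda$, which is Eq.~\eqref{eq:qlambda_closed_form}.

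As a cross-check, the same form falls out of a Lagrangian argument. Maximize $\sum_t q(t)s_t - \sum_t q(t)\log q(t)$ subject to $\sum_t q(t)=1$. Stationarity gives $s_t - \log q(t) - 1 - \mu = 0$, so $q(t)\propto e^{s_t}$. The objective is strictly concave, since $-\sum q\log q$ is strictly concave and the other term is linear, and the feasible set is convex. Hence this stationary point is the unique global maximizer.

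The nonnegativity constraints are inactive because $-q\log q$ has infinite slope at $0$, which pushes every coordinate of the optimum strictly inside the simplex. The KL argument sidesteps this boundary issue entirely.

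The only step needing care is justifying $A_t>0$, so that $\log A_t$ and $A_t^\lambda$ are finite. Here I will appeal to the strict positivity of softmax weights in Eq.~\eqref{eq:visual_lookback_score}. Alternatively, one can allow $\log A_t=-\infty$, which forces $q(t)=0$ and agrees with $A_t^\lambda=0$ for $\lambda>0$.
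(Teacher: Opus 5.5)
Your proof is correct, but your main argument takes a different route from the paper's.

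The paper uses the Lagrangian argument you give only as a cross-check. It assumes $A_t>0$ and appeals to strict concavity on the simplex for existence and uniqueness. It then imposes only the equality constraint $\sum_t q(t)=1$ through a multiplier and reads off $q(t)\propto A_t^\lambda$ from stationarity.

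Your primary argument instead verifies a guessed candidate through the identity $\lambda\,\mathbb{E}_{t\sim q}[\log A_t]+H(q)=\log Z-\mathrm{KL}(q\,\|\,q^\star)$. This route has three advantages:
\begin{itemize}
\item Gibbs' inequality gives existence, uniqueness and global optimality in one step, with no calculus.
\item It also identifies the optimal value, $\log\sum_j A_j^\lambda$.
\item It handles the boundary of the simplex automatically, since the identity holds under the convention $0\log 0=0$. The paper's multiplier argument silently drops the nonnegativity constraints, and your remark about the infinite slope of $-q\log q$ at $0$ supplies the justification the paper omits.
\end{itemize}

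The Lagrangian route's advantage is that it derives the Gibbs form rather than requiring you to guess it first.

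You also justify $A_t>0$ from the strict positivity of softmax weights, where the paper simply assumes it. That justification needs $\mathcal{P}_v$ to be nonempty, i.e.\ at least one vision token.
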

\begin{proof}
    See Appendix~\ref{appendix:theory_proof_q}.
\end{proof}
When $\lambda=0$, $q_\lambda$ is uniform; as $\lambda$ increases, $q_\lambda$ assigns more mass to positions with larger visual lookback score~(O3).

\noindent\textbf{Final score.} Instantiating $S$ with $q_\lambda$ gives the final response score \name{}:
\begin{equation}
\label{eq:final_score}
\begin{aligned}
    S(\mathbf{y}\mid\mathbf{x},\mathbf{v})
    &=
    \mathbb{E}_{t \sim q_\lambda}
    \left[
    \log \big(p_t\big)
    +
    \alpha \log \big(A_t\big)
    \right] \\
    &=
    \sum_{t=1}^{T}
    \frac{
    A_t^\lambda
    \left(
    \log \big(p_t\big)
    +
    \alpha \log \big(A_t\big)
    \right)
    }{
    \sum_{j=1}^{T}
    A_j^\lambda
    }.
\end{aligned}
\end{equation}
\name{} measures the expected lookback calibrated token score under a visual relevance distribution. 
Intuitively, it evaluates model confidence in a response while emphasizing tokens that interact more strongly with the explicit visual reference.

%% file: Tex/Experiments.tex
\section{Experiments}
\subsection{Setups}

\paragraph{Benchmarks and metrics.}
We evaluate on four image-grounded benchmarks suited to Best-of-$N$ response selection: VQAv2~\citep{goyal2017vqav2}, CHAIR~\citep{rohrbach2018chair}, AMBER~\citep{wang2023amber}, and HallusionBench~\citep{guan2024hallusionbench}.
We chose this suite so that it (1) requires visual evidence beyond language plausibility, (2) provides sufficient candidate diversity and selection headroom, and (3) covers complementary response formats, including short-answer QA, open-ended generation, and discriminative visual-grounding evaluation.
For a consistent main comparison, we report a single higher-is-better primary metric per benchmark: accuracy for VQAv2, F1 for CHAIR and AMBER, and GPT-evaluated correctness for HallusionBench.
See Appendix~\ref{appendix:benchmark_details} for details.

\input{Tables/main_results}

\paragraph{Models and baselines.}
For the experiments, we consider three representative LVLMs: LLaVA-1.5-7B~\citep{liu2023visual}, Qwen2.5-VL-7B~\citep{bai2025qwen2}, and InternVL3-8B~\citep{zhu2025internvl3}. 
Also, to evaluate the effectiveness of \name{}, we consider two categories of baselines: language side and vision side.
(1) \textit{Language side}: Self-Certainty (SC)~\citep{kang2025scalable} measures model's token distributional confidence leveraging KL divergence with uniform distribution.
Universal Self-Consistency (USC)~\citep{chen2023universal} select most consistent response by prompting all candidates responses to model.
(2) \textit{Vision side}:
CLIPScore~\citep{hessel2021clipscore} leverages a pretrained vision-language encoder to measure the cosine similarity between the text embedding of a generated response and the corresponding image embedding.
VAUQ~\citep{park2026vauq} estimates model uncertainty by masking a fixed proportion of visual attention weights and computing the entropy of the resulting output distribution.
To evaluate the gain of each method, we additionally consider the simplest baseline, random selection (Random). 

\paragraph{Implementation details.} 
For all models, to ensure sufficient diversity among candidates, we generate $N=25$ candidate responses per input using nucleus sampling \citep{holtzman2019curious} with temperature=1.2 and top-p=0.9. 
\name{} has two hyperparameters, $\alpha$ and $\lambda$, which control the strength of token-level visual lookback calibration and response-level visual relevance weighting, respectively. We set these hyperparameters on a per-model basis. Specifically, we use $(\alpha, \lambda)=(7.0, 1.5)$ for LLaVA-1.5-7B, $(0.5, 1.25)$ for Qwen2.5-VL-7B, and $(0.25, 1.25)$ for InternVL3-8B.
{
See Appendix~\ref{appendix:experimental_details} for benchmark, baseline, and hyperparameter selection details.
}

\subsection{Main Results}
Table~\ref{tab:main_results} reports Best-of-$N$ selection performance for $N=5$ and $N=25$ across four benchmarks and three LVLMs.
Across all setups, \name{} achieves the highest model-wise average for every LVLM, improving over the strongest competing baseline by $+1.31$ on LLaVA-1.5-7B (over SC), $+1.74$ on Qwen2.5-VL-7B (over VAUQ), and $+0.69$ on InternVL3-8B (over USC).

While SC achieves competitive performance as an output-space baseline, \name{} improves over SC in most settings and achieves higher average scores.
This suggests that visually grounded response selection benefits from measuring not only how confident a response is, but whether that confidence is supported by visual reference usage.

USC is also competitive in InternVL3-8B, but its gains are less stable across models and candidate budgets.
Since USC relies on the LVLM itself to select candidate responses, its effectiveness may depend on the model's own selection ability, not just on the quality of the sampled candidates.

Compared with VAUQ which estimates reliability from a perturbed visual condition, \name{} uses visual evidence in a more candidate-specific way;
\name{} directly measures whether high-confidence tokens look back to the visual input..
This makes \name{} well suited to Best-of-$N$ selection, where the score must distinguish candidates whose confident claims are visually supported from those that are merely plausible.

\input{Figures/n_scaling}

\subsection{Scaling Results}
We further investigate how each scoring method benefits from varied candidate budget. 
Figure~\ref{fig:N_Scaling} shows Best-of-$N$ accuracy as $N$ increases from 1 to 25 on HallusionBench for LLaVA-1.5-7B and Qwen2.5-VL-7B. 
\name{} maintains a consistent advantage over all baselines across the full range of $N$. 
While SC improves gradually with $N$, \name{} maintains a clear margin as the candidate pool grows, especially on Qwen2.5-VL-7B.
CLIPScore and VAUQ do not consistently benefit from larger candidate pools.
This suggests that scaling Best-of-$N$ requires a reliable grounding-aware score, rather than simply increasing the number of candidates.
Overall, \name{} shows stable performance gains as $N$ increases.

\input{Figures/ablation}

\subsection{Additional Analyses}
\paragraph{Ablation study.}

Figure~\ref{fig:ablation} ablates the two hyperparameters of \name{} on AMBER with LLaVA-1.5-7B: $\alpha$ for token-level visual lookback calibration and $\lambda$ for response-level visual relevance weighting.
In the top panel, increasing $\alpha$ yields modest gains when $\lambda=0$ because aggregation remains uniform.
With $\lambda=1.5$, however, the same increase leads to stronger performance, showing that token-level calibration is more effective when visually relevant tokens are weighted more heavily.
In the bottom panel, increasing $\lambda$ improves performance when $\alpha=0$, but with $\alpha=7$, performance peaks at a moderate $\lambda$ and declines as the aggregation becomes too sharp.
These results show that token-level lookback calibration and visual relevance-based aggregation are complementary.
\paragraph{Random image control.}
Beyond removing the image in Section~\ref{sec:motivation_1}, we also replace it
with a randomly sampled one, so that the scoring context remains a well-formed
image--text input and only the visual evidence is mismatched. With the candidate
set frozen and LLaVA-1.5-7B at $N=25$, this replacement removes 90.0--112.5\% of
\name{}'s gain over random selection, whereas SC retains roughly half
of its gain on VQAv2 and is essentially unchanged on CHAIR.
Appendix~\ref{appendix:random_image} reports the protocol together with top-1
agreement between original-image and random-image scoring.

\paragraph{Robustness to hyperparameters.}
The main results use per-model $(\alpha,\lambda)$, so we check whether the
conclusions depend on this choice. Under a single global setting
$(\alpha,\lambda)=(0.25,1.25)$ shared by every model and benchmark,
\name{} still obtains the highest model-wise average on all three
LVLMs and loses only 0.19 points relative to the reported configuration. The
per-model values mainly absorb the different numerical scales of $\log p_t$ and
$\log A_t$ across architectures rather than fitting each setting.
Appendix~\ref{appendix:global_hp} reports the global-setting results, and
Appendix~\ref{appendix:sensitivity} sweeps both hyperparameters over the full search space.

\paragraph{Variance across random seeds.}
Since candidates are sampled stochastically, we repeat the $N=5$ setting on VQAv2 and CHAIR with two additional seeds.
\name{} obtains the highest mean in all four model--benchmark pairs, while the strongest competing baseline changes across settings. Per-seed means and standard deviations for all methods are reported in Appendix~\ref{appendix:seed}.

\paragraph{Stronger evaluators and human preference.}
A pairwise LVLM-as-a-judge selector~\citep{chen2024mllmjudge} improves over the batch-style USC selector, but remains below \name{} on every benchmark; Appendix~\ref{appendix:judge} reports the tournament comparison results.
On the CHAIR instances where \name{} and SC select different responses, human annotators prefer the \name{}-selected response in $50.3\%$ of judgments versus $35.0\%$ for SC, while automatic CHAIR-F1 ties nearly half of these cases; Appendix~\ref{appendix:human} reports the annotation setup and the per-sample breakdown.

\input{Figures/computation}
\paragraph{Scoring overhead.}

To evaluate the computational efficiency of \name{}, we measure post-generation scoring overhead in milliseconds per response on CHAIR across three LVLMs.
As shown in Figure~\ref{fig:computation}, \name{} adds only modest overhead and remains substantially cheaper than VAUQ and USC, which require perturbation-based uncertainty estimation or additional candidate-comparison passes.
This shows that \name{} provides a favorable efficiency--grounding trade-off: unlike SC, it incorporates an explicit visual-grounding signal, yet unlike heavier visual scorers, it obtains this signal directly from the LVLM's internal likelihood and attention statistics without external models or expensive post-hoc verification.
See Appendix~\ref{appendix:computation} for measurement setup and hardware configuration.

%% file: Tables/main_results.tex
\begin{table*}[t]
\begin{center}
\begin{small}
\setlength{\tabcolsep}{5pt}
\caption{\textbf{Main results.} 
Best-of-$N$ selection performance for $N=5$ and $N=25$ across four benchmarks and three LVLMs. Average denotes the average score over all benchmark--$N$ combinations within each model. Higher scores ($\uparrow$) indicate better performance. The best results in each setup are \textbf{bolded}, and the second-best are \underline{underlined}.}
\label{tab:main_results}
\resizebox{0.98\textwidth}{!}{
\begin{tabular}{llcccccccc|c}
    \toprule
    \multirow{2}{*}[-0.5ex]{\textbf{Model}}
    & \multirow{2}{*}[-0.5ex]{\textbf{Method}}
    & \multicolumn{2}{c}{\textbf{VQAv2}}
    & \multicolumn{2}{c}{\textbf{CHAIR}}
    & \multicolumn{2}{c}{\textbf{HallusionBench}}
    & \multicolumn{2}{c|}{\textbf{AMBER}}
    & \multirow{2}{*}[-0.5ex]{\textbf{Average}} \\
    \cmidrule(lr){3-4}\cmidrule(lr){5-6}\cmidrule(lr){7-8}\cmidrule(lr){9-10}
    & & $N=5$ & $N=25$ & $N=5$ & $N=25$ & $N=5$ & $N=25$ & $N=5$ & $N=25$ & \\
    \midrule

    \multirow{6}{*}{\textbf{LLaVA-1.5-7B}}
    & Random     & \tc{62.57} & \tc{60.40} & \tc{70.20} & \tc{71.52} & \tc{44.84} & \tc{43.82} & \tc{63.90} & \tc{64.08} & \tc{60.16} \\
    & SC         & \tc{65.93} & \tc{65.70} & \tb{72.76} & \tb{73.53} & \tb{47.21} & \tb{46.19} & \tc{64.10} & \tc{63.45} & \tb{62.36} \\
    & USC        & \tc{63.78} & \tc{61.62} & \tc{71.07} & \tc{70.87} & \tc{43.09} & \tc{43.20} & \tc{64.61} & \tc{63.96} & \tc{60.28} \\
    & CLIPScore  & \tc{64.07} & \tc{62.27} & \tc{71.89} & \tc{71.93} & \tc{43.15} & \tc{43.65} & \ta{66.08} & \tb{65.25} & \tc{61.03} \\
    & VAUQ       & \tb{66.03} & \tb{66.53} & \tc{72.48} & \tc{73.15} & \tc{45.01} & \tc{45.35} & \tc{64.28} & \tc{64.22} & \tc{62.13} \\
    & \ca{LookBack (Ours)}  & \ca{66.63} & \ca{67.60} & \ca{74.03} & \ca{74.43} & \ca{47.38} & \ca{48.22} & \cb{65.33} & \ca{65.74} & \ca{63.67} \\
    \midrule
    
    \multirow{6}{*}{\textbf{Qwen2.5-VL-7B}}
    & Random     & \tc{63.40} & \tc{62.23} & \tc{74.18} & \tc{73.22} & \tc{57.19} & \tc{57.36} & \tc{74.19} & \tc{73.84} & \tc{66.95} \\
    & SC         & \tc{66.33} & \tc{65.47} & \tc{74.41} & \tc{71.97} & \tb{57.36} & \tc{58.04} & \tc{74.43} & \tc{74.30} & \tc{67.78} \\
    & USC        & \tb{66.88} & \tc{66.73} & \tc{74.63} & \tc{73.96} & \tc{56.80} & \tc{56.91} & \tc{74.33} & \tb{74.64} & \tc{68.11} \\
    & CLIPScore  & \tc{64.07} & \tc{62.17} & \tc{74.47} & \tb{74.80} & \tb{57.36} & \tb{58.21} & \tc{74.17} & \tc{74.35} & \tc{67.45} \\
    & VAUQ       & \tc{66.67} & \tb{66.83} & \tb{75.15} & \tc{73.24} & \tc{57.02} & \tc{57.19} & \tb{74.77} & \tc{74.50} & \tb{68.17} \\
    & \ca{LookBack (Ours)}  & \ca{68.23} & \ca{67.47} & \ca{75.42} & \ca{75.28} & \ca{60.07} & \ca{61.93} & \ca{74.98} & \ca{75.92} & \ca{69.91} \\
    \midrule

    \multirow{6}{*}{\textbf{InternVL3-8B}}
    & Random     & \tc{62.17} & \tc{61.33} & \tc{77.88} & \tc{77.78} & \tc{56.35} & \tc{57.19} & \tc{79.82} & \tc{79.45} & \tc{69.00} \\
    & SC         & \tc{66.70} & \tc{66.67} & \tc{78.35} & \tc{78.24} & \tc{56.35} & \tc{56.85} & \tb{81.66} & \tb{81.67} & \tc{70.81} \\
    & USC        & \ta{70.33} & \tc{69.39} & \tc{78.34} & \ta{78.28} & \tb{58.15} & \tb{58.38} & \tc{80.04} & \tc{79.72} & \tb{71.58} \\
    & CLIPScore  & \tc{63.47} & \tc{63.47} & \tc{78.32} & \tb{78.27} & \ta{59.39} & \ta{59.39} & \tc{79.68} & \tc{79.98} & \tc{70.25} \\
    & VAUQ       & \tc{68.23} & \tb{69.77} & \tb{78.49} & \ta{78.28} & \tc{57.19} & \tc{57.19} & \tc{81.47} & \tc{81.65} & \tc{71.53} \\
    & \ca{LookBack (Ours)}  & \ca{70.33} & \ca{72.57} & \ca{79.37} & \cc{78.22} & \cc{57.02} & \cc{56.85} & \ca{81.77} & \ca{82.07} & \ca{72.27} \\
    \bottomrule
    
\end{tabular}
}
\end{small}
\end{center}
\end{table*}

%% file: Figures/n_scaling.tex
\begin{figure}[t]
    \includegraphics[width=\linewidth]{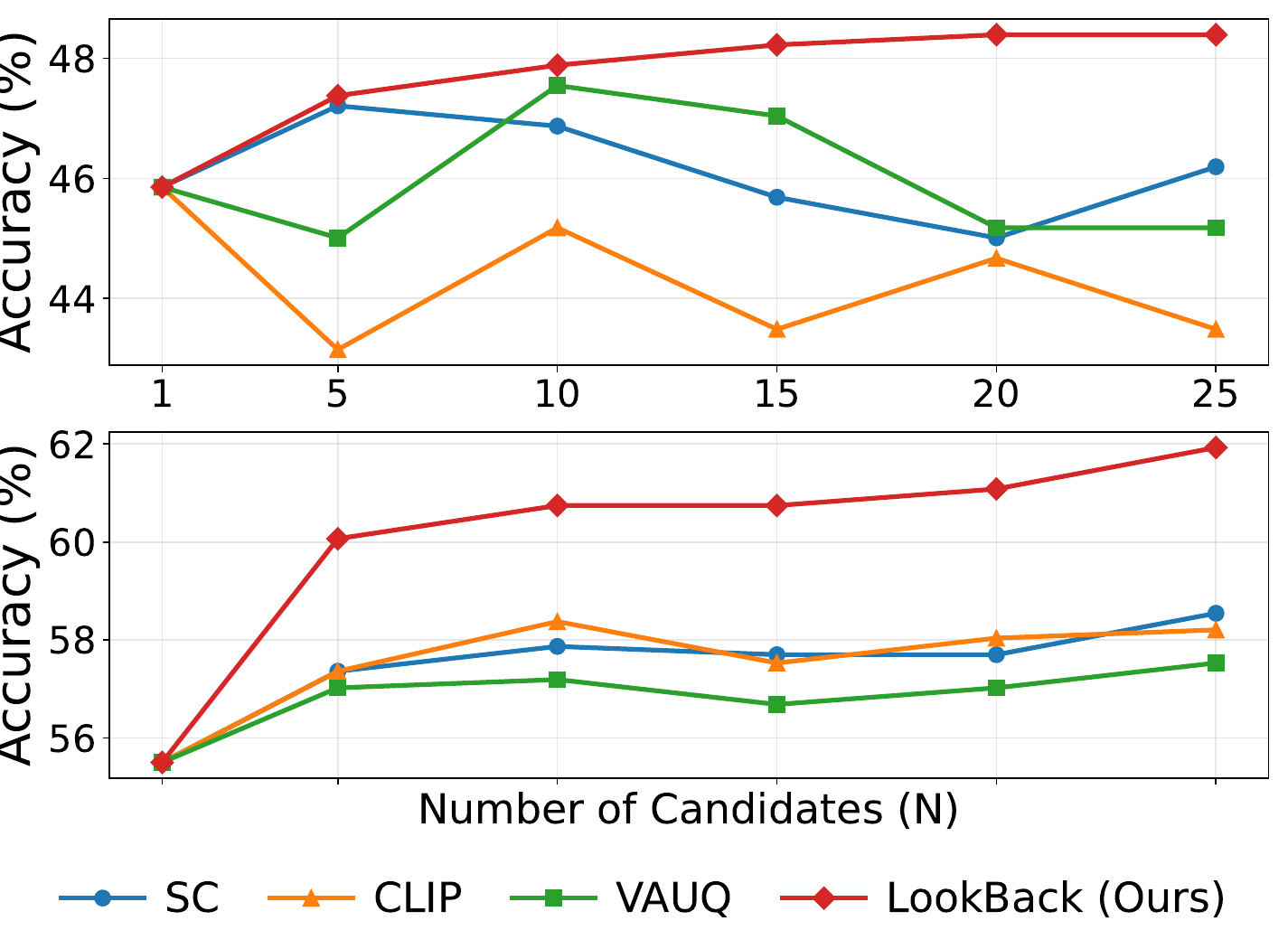}
    \caption{\textbf{Best-of-$N$ scaling.}
    We report Best-of-$N$ selection performance as the candidate pool size increases from $N{=}1$ to $N{=}25$ on HallusionBench with LLaVA-1.5-7B (top) and Qwen2.5-VL-7B (bottom).}
    \label{fig:N_Scaling}
\end{figure}

%% file: Figures/ablation.tex
\begin{figure}[t]
    \includegraphics[width=\linewidth]{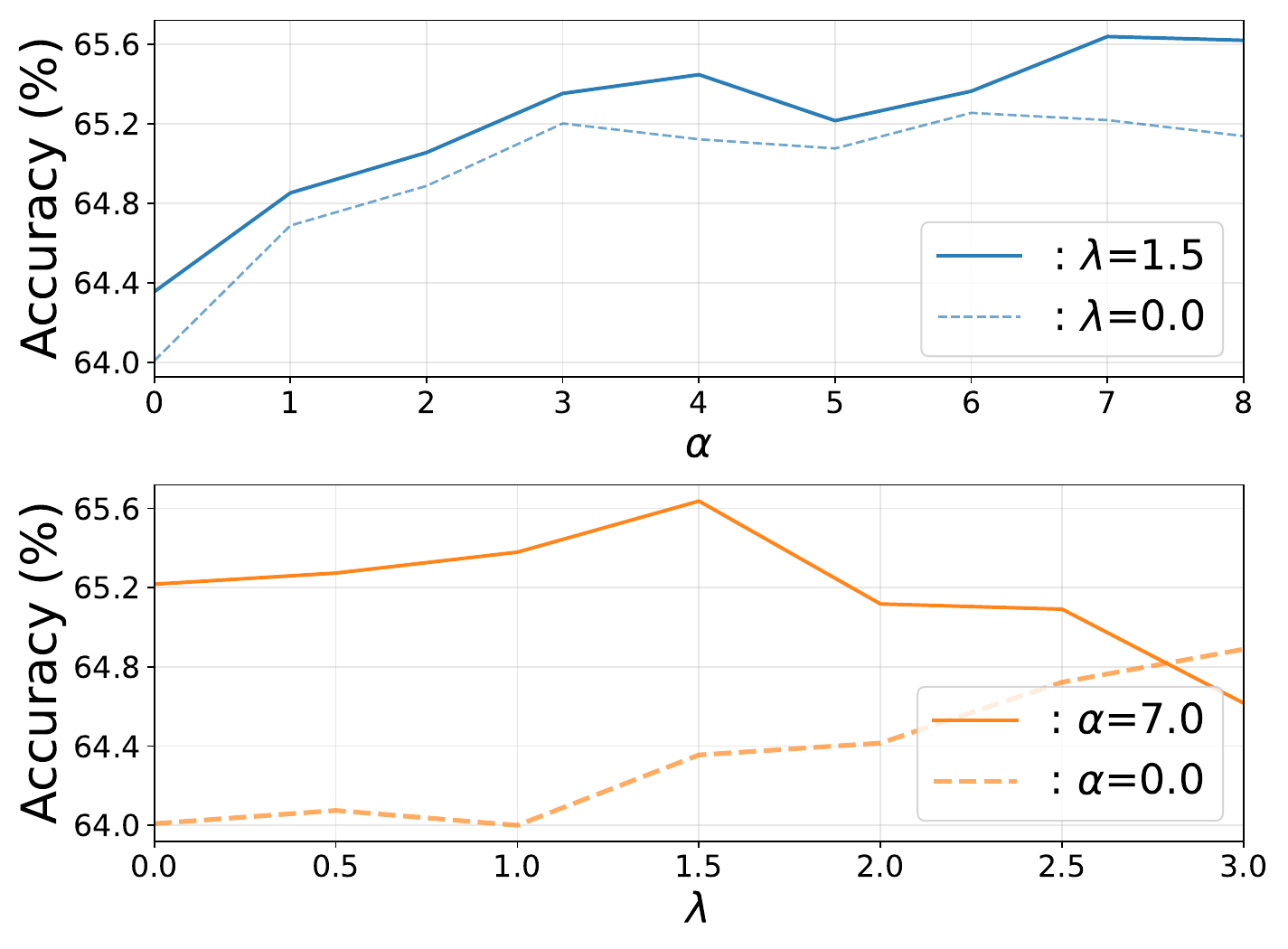}
    \caption{\textbf{Ablation study.}
        We ablate token-level calibration strength $\alpha$ and the response-level visual weighting strength $\lambda$ on AMBER with LLaVA-1.5-7B.
        Top: performance when varying $\alpha$ with fixed $\lambda$.
        Bottom: performance when varying $\lambda$ with fixed $\alpha$.
        Dashed lines show the corresponding variants without the other component.
    }
    \label{fig:ablation}
\end{figure}

%% file: Figures/computation.tex
\begin{figure}[t]
    \includegraphics[width=\linewidth]{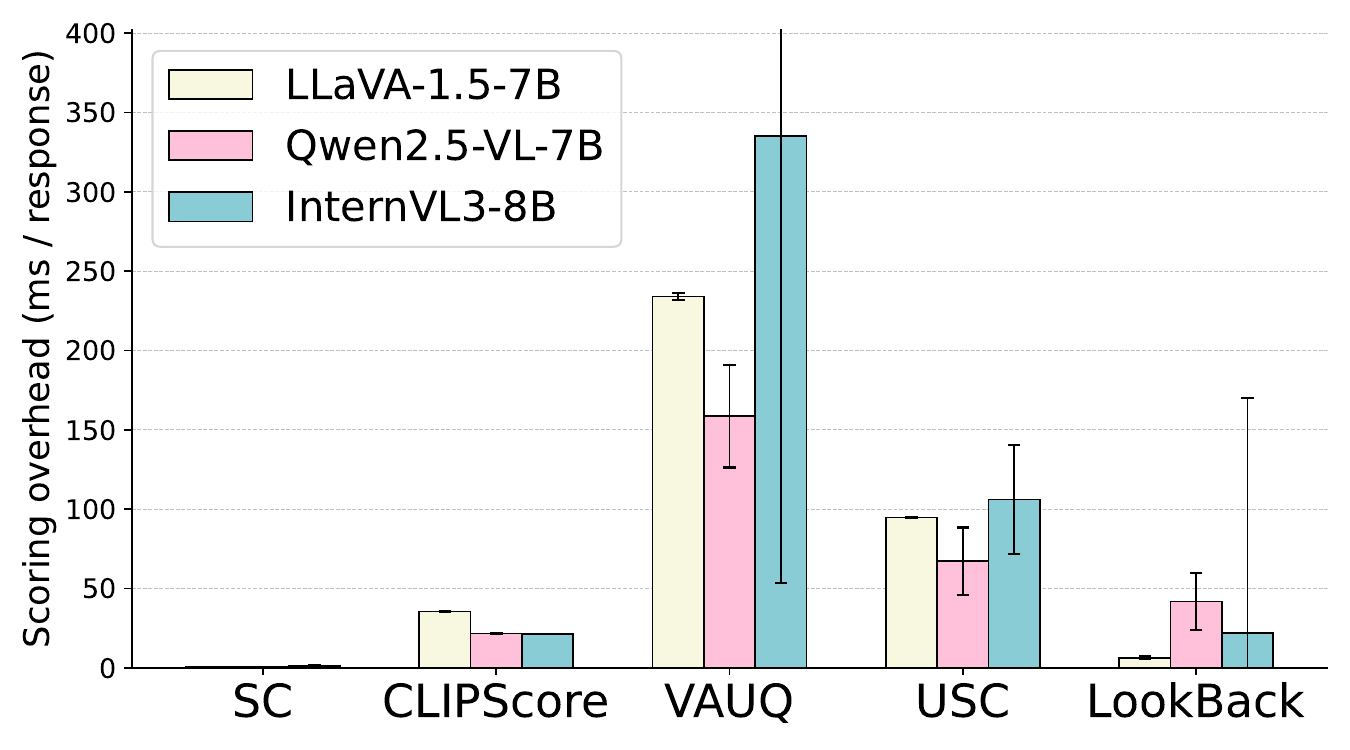}
    \caption{\textbf{Scoring overhead.}
    We compare the average scoring time per response across three LVLMs on CHAIR.
    Error bars indicate standard deviation.
    }
    \label{fig:computation}
\end{figure}

%% file: Tex/Conclusion.tex
\section{Conclusion}

We presented \name{}, a training-free response scoring method for LVLMs that calibrates output-space confidence with visual lookback. 
Our motivation analysis shows that confidence-based scoring can remain largely insensitive to the image, even when computed under image-conditioned generation, and therefore tends to capture textual plausibility rather than visual grounding. 
To address this gap, \name{} combines token likelihood with a token-level visual lookback score and aggregates these calibrated token scores according to a visual relevance distribution. 
Across multiple LVLMs and visual understanding benchmarks, \name{} consistently improves Best-of-$N$ selection over both linguistic and vision side baselines, while requiring no auxiliary verifier, training, or extra inference passes.

\section*{Limitations}

\name{} requires access to internal attention weights, which limits its applicability to black-box LVLMs.
Beyond this access requirement, visual lookback is only a proxy for visual reference usage, not a guarantee of factual correctness: a model can strongly attend to image tokens while still generating an incorrect or unsupported response.
The reliability of this proxy may also depend on model-specific attention behavior, as attention distributions and their calibration can vary across LVLM architectures.
Consistent with this, the residual errors presented in Appendix~\ref{appendix:failure} concentrate on relational and comparative judgments, where the response refers to the relevant entities but misjudges how they compare.

Our evaluation also covers a specific regime: image-grounded Best-of-$N$ selection with relatively concise responses. Whether \name{} generalizes to long-form multimodal reasoning, multi-image or video inputs, and more complex grounding scenarios remains an important direction for future work.

Two extensions are naturally expected from the formulation. 
First, \name{} leaves generation unchanged and ranks whatever candidate pool a decoder produces, so moving the score to generation time is a natural next step: since $q_\lambda$ is defined over a completed response, an inference-time variant would need to estimate online which generation steps require visual grounding. 
Second, the same principle may apply whenever a response should be grounded in a designated part of the input context, such as retrieved documents in RAG, tool outputs, or instruction tokens for prompt following. 
In such settings, a reliable scorer should assess not only whether a response is likely under the model's output distribution, but also whether high-confidence generation steps refer back to the intended source. 
We believe this perspective opens a path toward lightweight, model-internal scoring methods for source-grounded generation beyond the visual domain.

\section*{Broader Impact and Ethical Implications}

\name{} is intended to improve the reliability of LVLM response selection by favoring candidates whose high-confidence tokens are more strongly tied to the visual input.
This direction may be beneficial in applications where visually unsupported responses can mislead users, such as image-based assistance, educational tools, accessibility interfaces, and content analysis systems.
Because \name{} requires neither additional training nor an auxiliary evaluator, it may also provide a lightweight mechanism for improving response selection in resource-constrained settings.

At the same time, \name{} should not be interpreted as a guarantee of correctness or safety.
Visual lookback is only a proxy for reference usage: a response can strongly attend to image tokens while still being incorrect, biased, or harmful.
These risks are especially consequential in high-stakes applications, including medical, legal, surveillance, and accessibility-critical settings.
We therefore recommend using \name{} only as a response-selection aid, together with task-specific validation and human oversight.

\section*{Acknowledgments}

Dongseok Lee and Jaehyung Kim are affiliated with the Department of Artificial Intelligence at Yonsei University.
This research was supported in part by Institute for Information \& communications Technology Planning \& Evaluation (IITP) grant funded by the Korea government (MSIT) (No. RS-2020-II201361, Artificial Intelligence Graduate School Program (Yonsei University); No. RS-2025-25442405, Development of a Self-Learning World Model-Based AGI System for Hyperspectral Imaging).

%% file: Tex/Appendix.tex
\clearpage
\appendix

\section{Additional Theoretical Analysis}
\label{appendix:theory}

\subsection{Log-Linear Product-of-Experts Interpretation}
\label{appendix:theory_poe}

The grounded token score can be written as
\begin{equation*}
    u_t
    =
    \log (p_t)
    +
    \alpha \log (A_t)
    =
    \log \big(p_t \cdot A_t^\alpha\big).
\end{equation*}
This resembles a product-of-experts-style score~\cite{hinton2002training},
where output-space confidence \(p_t\) and visual lookback score \(A_t\)
serve as complementary factors that are combined multiplicatively into an
unnormalized token-level score.
The hyperparameter \(\alpha\) controls the relative strength of the visual
factor.
Thus, tokens with low visual lookback score receive a lower combined score
even when their output probability is high.

\subsection{Proof of Proposition~\ref{prop:q_lambda}}
\label{appendix:theory_proof_q}

We prove that \(q_\lambda\) defined by
\begin{equation*}
    q_\lambda
    =
    \arg\max_{q \in \Delta_T}
    \Big[
    \lambda\,\mathbb{E}_{\tau \sim q}\big[\log (A_\tau)\big]
    +
    H(q)
    \Big]
\end{equation*}
has the closed-form solution
\begin{equation*}
    q_\lambda(t)
    =
    \frac{A_t^\lambda}{\sum_{j=1}^{T} A_j^\lambda}.
\end{equation*}
Here,
\begin{equation*}
\Delta_T
=
\left\{
q\in\mathbb{R}_{\ge 0}^{T}
:
\sum_{t=1}^{T}q(t)=1
\right\}
\end{equation*}
is the set of distributions over output positions.

\begin{proof}
Expanding the objective using the definition of \(H(q)\), the problem becomes
\begin{equation*}
    \max_{q \in \Delta_T}
    \sum_{t=1}^{T} q(t)
    \Big[
    \lambda \log (A_t) - \log (q(t))
    \Big].
\end{equation*}
Assuming \(A_t>0\) for all \(t\), this is a strictly concave maximization
over the probability simplex \(\Delta_T\), so a unique solution exists.
Introducing a Lagrange multiplier \(\eta\) for the constraint
\(\sum_t q(t)=1\), the Lagrangian is
\begin{equation*}
\begin{aligned}
    \mathcal{L}
    =
    \sum_{t=1}^{T} q(t)\lambda \log (A_t)
    & -
    \sum_{t=1}^{T} q(t)\log (q(t)) \\
    & +
    \eta\!\left(\sum_{t=1}^{T} q(t)-1\right).
\end{aligned}
\end{equation*}
Setting \(\partial\mathcal{L}/\partial q(t)=0\) gives
\begin{equation*}
    \lambda \log (A_t) - \log (q(t)) - 1 + \eta = 0,
\end{equation*}
which implies
\begin{equation*}
    \log (q(t))
    =
    \lambda \log (A_t)
    +
    (\eta - 1).
\end{equation*}
Therefore,
\begin{equation*}
    q(t)
    \propto
    A_t^\lambda.
\end{equation*}
Applying the normalization constraint \(\sum_t q(t)=1\) yields
\begin{equation*}
    q_\lambda(t)
    =
    \frac{A_t^\lambda}{\displaystyle\sum_{j=1}^{T} A_j^\lambda}.
\end{equation*}
\end{proof}

\noindent\textbf{Interpretation.}
The exponent \(\lambda\) controls the sharpness of \(q_\lambda\).
When \(\lambda=0\), \(q_\lambda\) is the uniform distribution over output
positions, recovering uniform averaging.
As \(\lambda \to \infty\), \(q_\lambda\) concentrates on the position with
the largest \(A_t\), approaching hard selection of the most visually attended
token.
For finite \(\lambda>0\), \(q_\lambda\) acts as a soft selector that prefers
visually attended positions while maintaining nonzero weight on all positions.

\subsection{Geometric Mean Interpretation}
\label{appendix:theory_geomean}

We show that the proposed score \(S(\mathbf{y}\mid\mathbf{x},\mathbf{v})\)
can be interpreted as the log of a visual-relevance-weighted geometric mean.
This provides a justification for using a normalized weighted average rather
than an unnormalized sum over output tokens.

\begin{corollary}
\label{cor:geomean}
Let \(q_\lambda\) be as defined in Proposition~\ref{prop:q_lambda}.
Then
\begin{equation*}
    \exp\!\Big(S(\mathbf{y}\mid\mathbf{x},\mathbf{v})\Big)
    =
    \prod_{t=1}^{T}
    \Big(p_t \cdot A_t^{\alpha}\Big)^{q_\lambda(t)}.
\end{equation*}
That is, \(\exp(S)\) is the visual-relevance-weighted geometric mean of the
token-level product scores \(p_t \cdot A_t^\alpha\).
\end{corollary}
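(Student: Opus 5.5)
The plan is to exponentiate the definition of the final score directly; no optimization argument is needed beyond invoking Proposition~\ref{prop:q_lambda} for the form of the weights. First, I will start from Eq.~\eqref{eq:general_response_score} instantiated with $q_\lambda$:
\begin{equation*}
S(\mathbf{y}\mid\mathbf{x},\mathbf{v})=\sum_{t=1}^{T} q_\lambda(t)\,u_t .
\end{equation*}
Second, I will rewrite each token score using the identity $u_t=\log(p_t)+\alpha\log(A_t)=\log\big(p_t\cdot A_t^{\alpha}\big)$ noted in Section~\ref{sec:method_2} and Appendix~\ref{appendix:theory_poe}. This step needs $p_t>0$ and $A_t>0$, which is the same positivity assumption used in the proof of Proposition~\ref{prop:q_lambda}. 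Both are automatic here: the token $y_t$ was sampled, so $p_t>0$, and softmax attention weights are strictly positive, so $A_t>0$.

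Third, I will apply $\exp$ and use two elementary facts: $\exp$ turns the finite sum into a product, and $\exp\big(q_\lambda(t)\log z_t\big)=z_t^{\,q_\lambda(t)}$ for $z_t=p_tA_t^{\alpha}>0$. Together these give
\begin{equation*}
\exp(S)=\prod_{t=1}^{T}\big(p_tA_t^{\alpha}\big)^{q_\lambda(t)} .
\end{equation*}

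Finally, to justify calling this a weighted geometric mean, I will note from the closed form in Proposition~\ref{prop:q_lambda} that $q_\lambda(t)\ge 0$ and $\sum_t q_\lambda(t)=1$. The exponents are therefore normalized weights, which is exactly the definition of a weighted geometric mean.

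There is no real obstacle. The only point requiring care is the positivity needed for the logarithms, and I will state it explicitly rather than leave it implicit.
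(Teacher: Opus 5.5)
Your proposal is correct and matches the paper's proof, which likewise substitutes $u_t=\log(p_t A_t^{\alpha})$ into $S=\sum_t q_\lambda(t)\,u_t$ and exponentiates both sides. Your explicit positivity remark, and your check that $q_\lambda(t)\ge 0$ with $\sum_t q_\lambda(t)=1$ (which justifies the ``weighted geometric mean'' reading), are small additions that the paper leaves implicit.
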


\begin{proof}
By definition,
\begin{equation*}
    S(\mathbf{y}\mid\mathbf{x},\mathbf{v})
    =
    \sum_{t=1}^{T} q_\lambda(t)
    \Big[
    \log (p_t)
    +
    \alpha \log (A_t)
    \Big].
\end{equation*}
Since
\begin{equation*}
    \log (p_t) + \alpha \log (A_t)
    =
    \log \big(p_t \cdot A_t^\alpha\big),
\end{equation*}
we have
\begin{equation*}
    S(\mathbf{y}\mid\mathbf{x},\mathbf{v})
    =
    \sum_{t=1}^{T} q_\lambda(t)
    \log \big(p_t \cdot A_t^\alpha\big).
\end{equation*}
Exponentiating both sides gives
\begin{equation*}
    \exp\!\Big(S(\mathbf{y}\mid\mathbf{x},\mathbf{v})\Big)
    =
    \prod_{t=1}^{T}
    \Big(p_t \cdot A_t^{\alpha}\Big)^{q_\lambda(t)}.
\end{equation*}
\end{proof}

\noindent\textbf{Implication.}
A standard log-likelihood sum, \(\sum_t \log (p_t)\), scales with response
length \(T\), making scores difficult to compare across candidates of
different lengths.
In contrast, the proposed score is equivalent to maximizing the
visual-relevance-weighted geometric mean
\begin{equation*}
    \prod_{t=1}^{T}
    \big(p_t \cdot A_t^\alpha\big)^{q_\lambda(t)},
\end{equation*}
where the exponents satisfy \(\sum_t q_\lambda(t)=1\).
Thus, the score compares candidates using a normalized weighted average of
token-level product scores rather than an unnormalized sum.
Unlike uniform length normalization with \(q(t)=1/T\), the weighting
\(q_\lambda\) adapts the normalization to visual relevance.

In particular, since the exponents \(q_\lambda(t)\) sum to one regardless of \(T\),
appending additional visually descriptive tokens redistributes weight among positions
rather than accumulating score, so the response score is not inflated by verbosity.

\subsection{Computational Cost of the Visual Lookback Score}
\label{appendix:cost}

\name{} reuses the attention weights produced during the generation forward pass, so it requires
no auxiliary model and no forward pass dedicated to scoring.
The remaining cost is that of exposing and aggregating those weights, which we analyze here and
measure in Appendix~\ref{appendix:computation}.

A naive implementation that returns the full attention tensors for all $T$ generated tokens, across $L$ layers and $H$ attention heads, stores
\begin{equation*}
    O\big(T \cdot L \cdot H \cdot |C_t|\big)
\end{equation*}
values, where $|C_t|$ is the context length at step $t$ and therefore grows with the number of
vision tokens.
This term dominates for long responses and for high-resolution inputs, which is the regime in
which attention-based scoring is most often criticized as expensive.

Equation~\ref{eq:visual_lookback_score}, however, depends on the attention weights only through
the mass assigned to vision-token positions $\mathcal{P}_v$.
Reducing over $\mathcal{P}_v$ inside the generation loop yields a single scalar $A_t$ per generated
token, so the statistic required by \name{} can be accumulated online and the additional memory is
\begin{equation*}
    O(T),
\end{equation*}
independent of $L$, $H$, and the number of vision tokens.
The additional computation is one reduction over $\mathcal{P}_v$ per decoding step, which is
included in the latency reported in Figure~\ref{fig:computation}.

\clearpage

\section{Additional Experimental Details}
\label{appendix:experimental_details}

\subsection{Benchmark Details}
\label{appendix:benchmark_details}
We select benchmarks that are informative for Best-of-$N$ response selection.
In particular, the candidate pool should contain meaningful variation in response quality, so that a response scorer has non-trivial room to improve over random selection.
We therefore consider both candidate diversity and oracle--average gaps when constructing the evaluation suite.
Table~\ref{tab:oracle_by_benchmark} reports oracle Best-of-$N$ performance, where the oracle selects the highest-scoring response among the sampled candidates for each input.
The large gap between oracle and average performance indicates that the sampled candidate pools contain substantial selection headroom.

For the main comparison, we report a single higher-is-better metric for each benchmark.
This avoids mixing heterogeneous native metrics in the main table while still allowing a consistent comparison of response scorers.
We provide benchmark-specific details below.

\paragraph{VQAv2.}
VQAv2~\citep{goyal2017vqav2} is a short-answer visual question answering benchmark where each question is paired with multiple human annotations.
We use VQAv2 to evaluate whether a scorer can select answers that are grounded in the visual content of the image.
For each generated response, we parse the predicted answer and evaluate it against the ground-truth annotations.
The primary metric is accuracy:
\[
\resizebox{1.0\columnwidth}{!}{$
    \text{Acc}(\textit{ans})
    =
    \min\left\{
        \frac{\#\text{ human annotators that provided } \textit{ans}}{3},
        1
    \right\}.
$}
\]
Dataset-level accuracy is computed by averaging the per-question scores.

\paragraph{CHAIR.}
CHAIR~\citep{rohrbach2018chair} evaluates object hallucination in open-ended image descriptions.
Following the CHAIR setup, generated captions are parsed into object mentions and matched to MS-COCO object categories using the benchmark's synonym mapping.
For each image, let $G_i$ denote the set of ground-truth objects and $P_i$ denote the set of predicted object mentions extracted from the generated response.
A predicted object is counted as hallucinated if it does not appear in $G_i$.
Although CHAIR conventionally reports CHAIR$_S$ and CHAIR$_I$, we report F1 as the primary metric to obtain a single higher-is-better score for Best-of-$N$ selection:
\[
    \text{Prec}_i = \frac{|P_i \cap G_i|}{|P_i|},
    \qquad
    \text{Rec}_i = \frac{|P_i \cap G_i|}{|G_i|},
\]
\[
    \text{F1}_i =
    \frac{2 \cdot \text{Prec}_i \cdot \text{Rec}_i}
    {\text{Prec}_i + \text{Rec}_i},
    \qquad
    \text{F1} = \frac{1}{|\mathcal{D}|}\sum_{i \in \mathcal{D}} \text{F1}_i.
\]
When the denominator is zero, the corresponding precision, recall, or F1 term is set to zero.

\input{Tables/oracle_results}

\paragraph{AMBER.}
AMBER~\citep{wang2023amber} is a multidimensional hallucination benchmark for LVLMs.
In the generative setting, AMBER evaluates whether generated responses contain visual instances that are supported by the image.
Unlike CHAIR, which focuses on object hallucination, AMBER covers a broader set of visual error types, including object existence, attributes, and relations.
For each sample, we compare generated visual instances against the ground-truth visual instance set.
Instances absent from the ground-truth set are treated as hallucinated.
For consistency with CHAIR and the main table, we report F1 as the primary higher-is-better metric:
\[
    \text{F1}_i =
    \frac{2 \cdot \text{Prec}_i \cdot \text{Rec}_i}
    {\text{Prec}_i + \text{Rec}_i},
    \qquad
    \text{F1} = \frac{1}{|\mathcal{D}|}\sum_{i \in \mathcal{D}} \text{F1}_i.
\]
This provides a unified measure that rewards both avoiding hallucinated visual claims and covering ground-truth visual content.

\input{Tables/computation_setup}

\paragraph{HallusionBench.}
HallusionBench~\citep{guan2024hallusionbench} evaluates visually grounded reasoning under cases where language priors and visual evidence can conflict.
We use it as a discriminative visual-grounding benchmark.
For each candidate response, we use a GPT-based evaluator to judge whether the response is visually grounded and consistent with the reference answer.
The evaluator returns a correctness label for each response, and we treat uncertain or invalid judgments as incorrect.
The primary metric is GPT-evaluated correctness:
\[
\resizebox{1.0\columnwidth}{!}{$
    \text{Correctness}
    =
    \frac{1}{|\mathcal{D}|}
    \sum_{i \in \mathcal{D}}
    \mathbf{1}\{
        \text{Judge}(I_i, q_i, y_i) = \texttt{Correct}
    \}.
$}
\]
This single accuracy-style metric allows HallusionBench to be compared with the other benchmarks under the same Best-of-$N$ selection framework.

\subsection{Baseline Implementation Details}
\label{appendix:baseline_implementation}
For USC, when the method failed to produce a valid selection, we fell back to the first sampled response. To reduce the order bias in USC methods, we report the average accuracy over three independent runs ($m=3$).

For VAUQ, we follow the layer choices specified in the original paper: 10--25 for LLaVA-1.5-7B, 12--26 for Qwen2.5-VL-7B, and 10--25 for InternVL3-8B. We use fixed masking ratios of $K=0.6$, $0.5$, and $0.4$ for the three models, respectively, and set $\alpha=1$ throughout all experiments. We implement VAUQ following its attention-knockout-based visual masking procedure as mentioned in paper.

\subsection{Hyperparameter Selection Protocol}
\label{appendix:hp_protocol}

\name{} has two hyperparameters, $\alpha$ and $\lambda$.
We use $(\alpha,\lambda)=(7.0,1.5)$ for LLaVA-1.5-7B, $(0.5,1.25)$ for Qwen2.5-VL-7B, and
$(0.25,1.25)$ for InternVL3-8B.
These values were selected on hyperparameter search space: $\alpha \in \{0.25, 0.50, \ldots, 1.75, 2.00, 3, 4, 5, 6, 7, 8\}$ and $\lambda \in \{0.25, 0.50, \ldots, 2.75, 3.00\}$.
To validate the hyperparameter robustness of \name{}, Appendix~\ref{appendix:global_hp} reports \name{} under a single global setting shared
by all models and benchmarks, which does not depend on this selection. Also, Appendix \ref{appendix:sensitivity} reports a hyperparameter sensitivity analysis over a search space range.

The per-model values of $\alpha$ differ mainly because $\log p_t$ and $\log A_t$ lie on model-dependent numerical scales. 
LLaVA-1.5-7B uses a vocabulary of about $32$K tokens and a fixed budget of $576$ vision tokens,
whereas Qwen2.5-VL-7B and InternVL3-8B use vocabularies of about $151$K tokens and dynamic-resolution visual inputs that can exceed $2{,}000$ vision tokens.
A larger vocabulary lowers typical $\log p_t$, and a larger number of vision tokens raises typical $A_t$, so the relative scale of the two terms in Equation~\ref{eq:lookback-calibrated_token_score} differs across these models.
Accordingly, $\alpha$ absorbs this scale difference in addition to controlling the strength of
lookback calibration.

\subsection{Scoring Overhead Measurement}
\label{appendix:computation}

We measure post-generation scoring overhead in milliseconds per response.
Candidate generation is excluded from timing: all scorers are given the same generated candidate responses, and we measure only the additional wall-clock time required to assign scores after generation.
Measurements are conducted on CHAIR for LLaVA-1.5-7B, Qwen2.5-VL-7B, and InternVL3-8B.

For SC and \name{}, scoring uses model-internal quantities from the LVLM scoring pass, including token probabilities and attention weights.
For CLIPScore, we include the external CLIP encoder forward pass.
For VAUQ, we include visual-attention perturbation and uncertainty computation.
For USC, which scores all candidates from the same input in one teacher-forcing forward pass, we divide the total candidate-set scoring time by the number of candidates.
We report mean milliseconds per response with standard deviation across evaluated samples.

Due to resource availability, LLaVA-1.5-7B is measured on an NVIDIA RTX A6000, while Qwen2.5-VL-7B and InternVL3-8B are measured on an NVIDIA H200.
Therefore, absolute latency values should be interpreted as scorer-overhead comparisons within each model, rather than as direct speed comparisons across LVLMs.

\input{Figures/appendix_pos}

\newpage

\section{Additional Quantitative Results}
\label{appendix:quanti}

\subsection{PoS}
\label{appendix:pos}

In Section~\ref{sec:motivation_2}, we group generated words into a visual-reference-prone content set and a textual-function set to compare visual lookback score with output-space confidence.
Here, we provide a fine-grained breakdown by individual POS tags.

We use the same generated responses and scoring setup as in Section~\ref{sec:motivation_2}.
Each word is assigned a POS tag using spaCy.
For words split into multiple subtokens, we average the corresponding subtoken scores to obtain a word-level score.
Visual lookback score and Self-Certainty are independently standardized over all analyzed output words within each model--benchmark setup, and we report the mean z-score for each POS category.

Figure~\ref{fig:appendix_pos} shows that content-oriented POS categories, such as nouns, proper nouns, adjectives, and numerals, generally exhibit higher visual lookback score than textual-function categories.
In contrast, textual-function categories often show relatively higher certainty, reflecting their linguistic predictability.
This fine-grained analysis supports the coarse POS grouping used in the main text.
At the same time, some rare or context-dependent categories exhibit noisier behavior, which motivates our use of grouped POS sets rather than drawing conclusions from individual POS tags.

\input{Tables/random_image_agreement}

\subsection{Random-Image Replacement Control}
\label{appendix:random_image}

\input{Tables/random_image_control}

\input{Tables/global_hyperparameters}

Section~\ref{sec:motivation_1} compares scoring with and without the input image.
Removing the image changes the format of the scoring context, so we
additionally run a control in which the context remains a well-formed image--text input and only
the visual evidence is mismatched.
For each instance we generate the $N=25$ candidates under the original image, freeze the candidate
set, and then replace only the conditioning image at scoring time with an image sampled uniformly
at random from the same benchmark.
Because the candidates never change, differences in selection are attributable to the scoring
function rather than to candidate quality or to dataset-specific answer priors.
We note that this control fixes the candidates after generation, so it isolates the scoring
function and does not speak to the generator itself.
All numbers use LLaVA-1.5-7B with $N=25$ on VQAv2 and CHAIR.

Table~\ref{tab:appendix_random_image} reports the top-1 agreement ratio between original-image and
random-image scoring.
SC selects the same top-1 candidate for $0.594$ of instances on average, far above $1/N=0.04$,
confirming the observation in Section~\ref{sec:motivation_1} under a matched input format.
\name{} changes its selection far more often ($0.278$), and the visual lookback score alone falls
in between.

Table~\ref{tab:random_image} reports the same control at the performance level, where the removed
gain defined as
\[
\frac{\text{Acc}^{\text{orig-img}}-\text{Acc}^{\text{rand-img}}}
     {\text{Acc}^{\text{orig-img}}-\text{Acc}^{\text{rand. selection}}} .
\]
A value above $100\%$ means that random-image scoring falls below random selection, which is the
case for \name{} on VQAv2.
\name{} thus loses most of its advantage when the visual reference is mismatched, whereas SC
retains roughly half of its gain on VQAv2 and is unchanged on CHAIR.

\subsection{Robustness to Hyperparameters}
\label{appendix:global_hp}

The hyperparameters used in the main results are model-specific
(Appendix~\ref{appendix:hp_protocol}).
To test whether the conclusions depend on this choice, we evaluate a single global setting
$(\alpha,\lambda)=(0.25,1.25)$ for every model and benchmark, which removes model-wise scaling
entirely.
Table~\ref{tab:appendix_global_hp} reports model-wise averages over four benchmarks and two
candidate budgets, $N=5$ and $N=25$.
Under this global setting \name{} remains the best method on average and drops $0.19$ points
relative to the reported configuration.

\subsection{Sensitivity Range}
\label{appendix:sensitivity}

Table~\ref{tab:appendix_hp_range} sweeps hyperparameter search space mentioned on \ref{appendix:hp_protocol} and summarizes the resulting distribution of Best-of-$N$ performance.
The reported configuration lies inside the observed range in all $24$ model--benchmark--$N$ settings and within two standard deviations of the range mean in $23$ of $24$ settings.
The range mean alone already exceeds every competing baseline in many settings (underlined), and in two settings it exceeds the reported \name{} value (bold).

\input{Tables/hyperparameter_sensitivity}

\input{Tables/multi_seed_results}

\input{Tables/lvlm_judge}

\subsection{Variance across Random Seeds}
\label{appendix:seed}

Candidates are sampled stochastically, so single-run numbers carry sampling variance.
We repeat the $N=5$ setting with two additional seeds, $\{123, 456\}$, and report mean $\pm$ std
over three runs including the original one.
We use VQAv2 and CHAIR because together they cover short-answer VQA and open-ended hallucination
evaluation while keeping the additional computation manageable.
As shown in Table~\ref{tab:appendix_seed}, \name{} obtains the highest mean in all four
model--benchmark pairs, while the strongest competing baseline changes across settings.
With three seeds we report means and standard deviations rather than significance tests.

\subsection{Comparison with Pairwise LVLM Judge}
\label{appendix:judge}

USC selects a response by prompting the LVLM with all candidates at once.
Following the observation that pairwise comparison is closer to human judgment than batch comparison~\citep{chen2024mllmjudge}, we implement a stronger self-judging variant: a pairwise
tournament over the $N=5$ candidates using LLaVA-1.5-7B, which requires four pairwise comparisons
rather than all $\binom{5}{2}$ pairs.
Table~\ref{tab:appendix_judge} shows that the pairwise judge improves over USC on average
($61.21$ versus $60.64$), and that \name{} obtains the best score on every benchmark, with a
$+2.13$ point improvement in the overall average.

\input{Tables/human_evaluation}

\input{Figures/human_evaluation_instructions}

\subsection{Human Evaluation}
\label{appendix:human}

We compare \name{} against SC by human judgment on CHAIR with LLaVA-1.5-7B and $N=25$.
Among $1{,}000$ instances the two methods select different top-ranked responses in $779$ cases, and
we evaluate these disagreement cases since identical selections carry no comparative information.
We randomly sample $50$ disagreement cases and shuffle the presentation order of the two responses.
The samples are split into two disjoint sets of $25$, each evaluated by six anonymous participants
with no participant overlap between the sets, yielding $300$ individual judgments.
As shown in Figure~\ref{fig:human_eval_instruction}, annotators were shown the image and two responses with the evaluation instructions, and were asked which response better described the image, with a tie option.

Table~\ref{tab:appendix_human} summarizes the outcome.
Automatic CHAIR-F1 declares a tie in $24$ of $50$ samples ($48.0\%$) and splits the remainder
evenly.
Human annotators prefer the \name{}-selected response in $151$ of $300$ individual judgments
($50.3\%$) versus $105$ ($35.0\%$) for SC, and, after aggregating non-tie votes within each sample,
on $27$ samples ($54.0\%$) versus $14$ ($28.0\%$).
\name{} can thus select responses that humans prefer in cases where the automatic metric is
indifferent.

\clearpage
\section{Additional Qualitative Results}
\label{appendix:quali}

In Figures~\ref{fig:appendix_quali_1} and \ref{fig:appendix_quali_5}, we additionally present the qualitative examples to compare SC, Visual lookback score, and our final score. 

Figure~\ref{fig:appendix_quali_1} visualizes token-level scores for two representative cases in which \name{} selects the correct top-ranked response, while both SC and Visual Lookback Score select incorrect responses.
These examples illustrate why neither confidence nor visual lookback alone is sufficient.
SC often assigns large scores to a small number of highly confident tokens, but these tokens need not correspond to visually grounded evidence for the question.
In contrast, Visual lookback Score broadly highlights tokens that refer to visible entities in the image, but it can also emphasize irrelevant or distracting visual content.
For example, in the counting case, Visual lookback Score attends to image-referential words such as \textit{cars}, \textit{house}, and \textit{truck}, but fails to distinguish the question-relevant evidence needed to answer the number of parked cars.
\name{} instead emphasizes tokens that are both visually grounded and relevant to the question, such as the single car in the image, leading to the correct selection.
A similar pattern appears in Figure~\ref{fig:appendix_quali_5}: Visual lookback score highlights visible entities such as the child, toothbrush, and person, while \name{} assigns high scores to tokens supporting the question-relevant relation between the two people.
These qualitative results support our design choice of combining token-level confidence with visual lookback and aggregating the resulting scores according to visual relevance.

\subsection{Failure Case Analysis}
\label{appendix:failure}
Figure~\ref{fig:appendix_quali_failure} shows an instance with LLaVA-1.5-7B on VQAv2, where the
question asks whether the stuffed animals are bigger than the baby.
The response selected by \name{} places high visual lookback scores on both compared entities and
on the comparative token itself, so the response is visually referential precisely at the positions
the question depends on.
The comparison is nevertheless resolved incorrectly.

This delimits what the visual lookback score measures.
A high $A_t$ indicates that the prediction step for $y_t$ consulted the vision tokens; it does not
indicate that the model resolved the relation among the entities it referenced.
Visual reference usage is therefore informative when the answer depends on which entities are
present in the image, and far less informative when the entities are jointly visible and the answer
depends only on how they compare.
Questions of the latter type are frequent in HallusionBench, which is consistent with \name{} being
least effective on that benchmark.
USC compares candidates using the LVLM itself and can exploit its comparative reasoning, while
CLIPScore uses a global image--text alignment signal that is less sensitive to which tokens
attended where, so both can recover the correct response in this regime.

\section{Usage of AI assistants}
In preparing this work, we utilized AI-based writing assistants in a limited way to suggest alternative phrasings, correct grammatical errors, and improve the readability.
All research ideas, experimental designs, implementation details, and reported results were determined, implemented, and verified by the authors, and all model outputs and quantitative results in the paper were generated and checked through our own code and experiments.
AI assistants did not generate any factual content reported in the paper (e.g., experimental results, dataset statistics, or citations), ensuring the originality, scientific contributions, technical content, methodology, and experimental findings are entirely attributable to the authors.

\input{Figures/qualitative_1}

\input{Figures/qualitative_2}

\input{Figures/qualitative_failure}

%% file: Tables/oracle_results.tex
\begin{table}[t]

\centering
\setlength{\tabcolsep}{4pt}
\caption{\textbf{Oracle Best-of-$N$ performance.}
Oracle selection performance for $N=5$ and $N=25$ across four benchmarks and three LVLMs.
Average denotes the average score across the four benchmarks for each candidate budget.}
\vspace{0.05in}
\label{tab:oracle_by_benchmark}
\begin{small}
\resizebox{1.0\linewidth}{!}{
\begin{tabular}{l|cc|cc|cc}
    \toprule
    \multirow{2}{*}{\textbf{Benchmark}}
    & \multicolumn{2}{c|}{\textbf{LLaVA-1.5-7B}}
    & \multicolumn{2}{c|}{\textbf{Qwen2.5-VL-7B}}
    & \multicolumn{2}{c}{\textbf{InternVL3-8B}} \\
    \cmidrule(lr){2-3}\cmidrule(lr){4-5}\cmidrule(lr){6-7}
    & $N=5$ & $N=25$ & $N=5$ & $N=25$ & $N=5$ & $N=25$ \\
    \midrule
    VQAv2
    & 84.07 & 92.50
    & 82.83 & 91.17
    & 88.60 & 94.80 \\

    CHAIR
    & 84.17 & 90.55
    & 86.11 & 91.47
    & 86.73 & 91.72 \\

    HallusionBench
    & 80.71 & 94.42
    & 83.42 & 92.89
    & 80.54 & 91.71 \\

    AMBER
    & 76.96 & 83.83
    & 84.97 & 89.91
    & 88.81 & 92.76 \\
    \midrule
    \textbf{Average}
    & 81.48 & 90.33
    & 84.33 & 91.36
    & 86.17 & 92.75 \\
    \bottomrule
\end{tabular}
}
\end{small}
\end{table}

%% file: Tables/computation_setup.tex
\begin{table*}[t]
    \centering
    \small
    \caption{\textbf{Hardware and software configuration.}
    Environment used for the scoring overhead measurement in Figure~\ref{fig:computation}, reported separately for the two GPU setups.}
    \label{tab:computation_setup}
    \begin{tabular}{lcc}
        \toprule
         & LLaVA-1.5-7B & Qwen2.5-VL-7B / InternVL3-8B \\
        \midrule
        GPU & NVIDIA RTX A6000 & NVIDIA H200 \\
        CPU & AMD EPYC 9354 & Intel Xeon Platinum 8568Y+ \\
        CUDA & 11.7 & 12.4 \\
        cuDNN & 8.5.0 & 9.1.0 \\
        Python & 3.10.15 & 3.9.21 \\
        PyTorch & 2.0.1+cu117 & 2.6.0+cu124 \\
        Transformers & 4.31.0 & 4.57.6 \\
        Random seed & 42 & 42 \\
        \bottomrule
    \end{tabular}
\end{table*}

%% file: Figures/appendix_pos.tex
\begin{figure*}[t]
    \includegraphics[width=\linewidth]{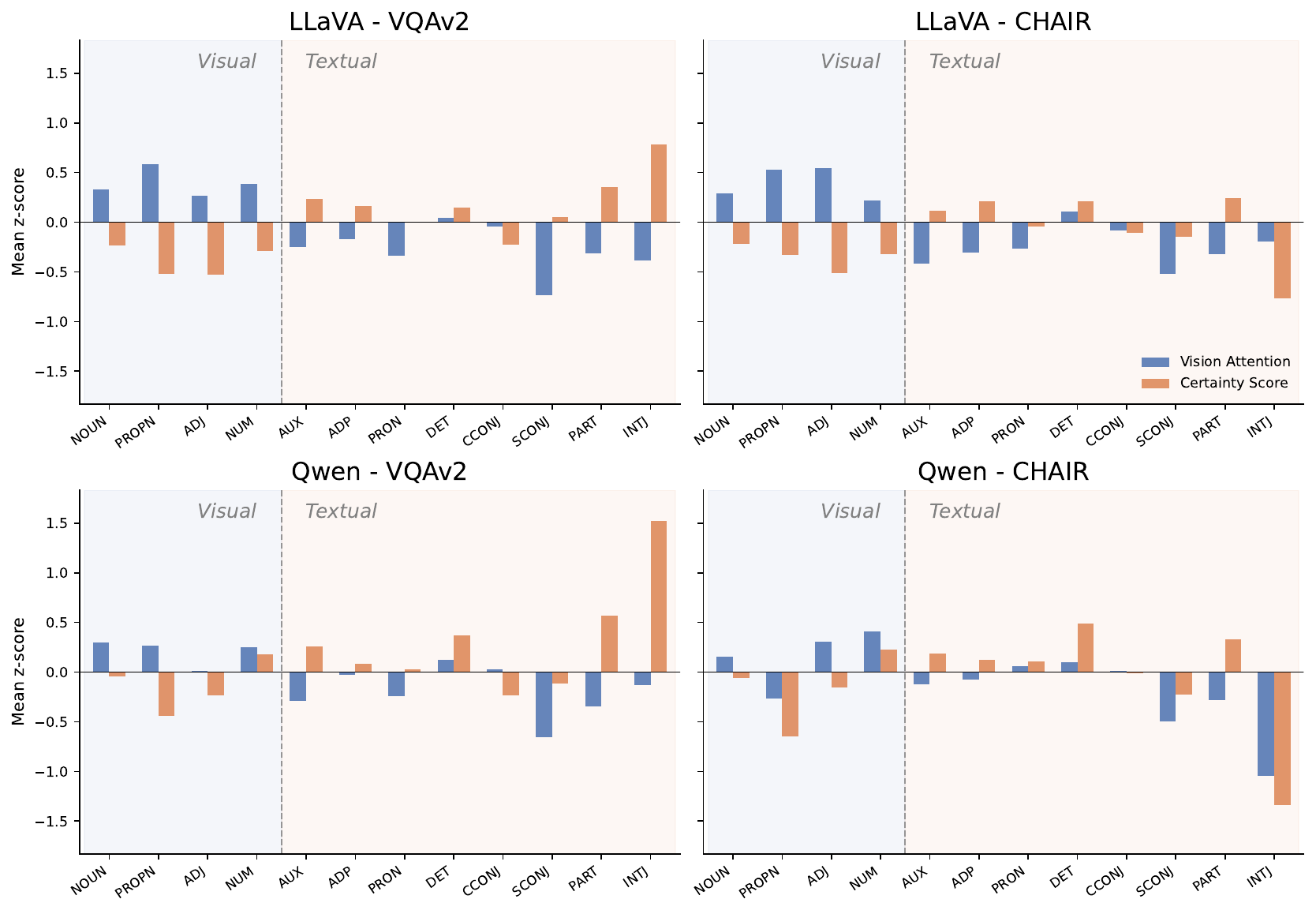}
    \vspace{-0.25in}
    \caption{\textbf{Visual lookback and SC emphasize different POS groups.}
    We report mean z-scored Visual lookback and Self-Certainty (SC) scores for each POS category.
    Across models and benchmarks, visual POS categories tend to receive higher Visual lookback scores, whereas textual categories tend to receive higher SC scores.
    }
    \vspace{-0.1in}
    \label{fig:appendix_pos}
\end{figure*}

%% file: Tables/random_image_agreement.tex
\begin{table}[t]
\centering
\small
\caption{\textbf{Top-1 agreement between original-image and random-image scoring.}
Fraction of instances for which the two scorings select the same top-ranked candidate, with LLaVA-1.5-7B and $N=25$.
Lower values indicate that the selected candidate changes more when the visual reference is
replaced.}
\label{tab:appendix_random_image}
\resizebox{\columnwidth}{!}{
\begin{tabular}{lccc}
    \toprule
    \textbf{Benchmark} & \textbf{SC} & \textbf{Visual Lookback} & \textbf{\name{}} \\
    \midrule
    VQAv2 & 0.550 & 0.420 & \textbf{0.288} \\
    CHAIR & 0.637 & 0.312 & \textbf{0.268} \\
    \midrule
    Avg.  & 0.594 & 0.366 & \textbf{0.278} \\
    \bottomrule
\end{tabular}
}
\end{table}

%% file: Tables/random_image_control.tex
\begin{table*}[t]

\centering
\small
\caption{\textbf{Random-image replacement control.}
Best-of-$N$ selection accuracy with LLaVA-1.5-7B ($N=25$) when the candidate set is fixed and only the image used for \emph{scoring} is replaced by a randomly sampled image.
\emph{Removed} denotes the fraction of each scorer's gain over random selection that disappears under replacement; values above 100\% indicate that random-image scoring falls below random selection.}
\label{tab:random_image}
\begin{tabular}{lccccc}
    \toprule
    \multirow{2}{*}[-0.5ex]{\textbf{Benchmark}}
    & \multirow{2}{*}[-0.5ex]{\textbf{Random}}
    & \multicolumn{2}{c}{\textbf{SC}}
    & \multicolumn{2}{c}{\textbf{\name{}}} \\
    \cmidrule(lr){3-4}\cmidrule(lr){5-6}
    & & Orig.$\rightarrow$Rand. & Removed & Orig.$\rightarrow$Rand. & Removed \\
    \midrule
    VQAv2 & 60.40 & 65.70 $\rightarrow$ 63.10 & 49.1\% & 67.60 $\rightarrow$ 59.50 & \textbf{112.5\%} \\
    CHAIR & 71.52 & 73.53 $\rightarrow$ 73.64 & $-5.5$\% & 74.43 $\rightarrow$ 71.81 & \textbf{90.0\%} \\
    \bottomrule
\end{tabular}
\end{table*}

%% file: Tables/global_hyperparameters.tex
\begin{table*}[h]
\centering
\small
\caption{\textbf{Single global hyperparameter setting.}
Model-wise averages over four benchmarks and two candidate budgets ($N=5,25$) under
$(\alpha,\lambda)=(0.25,1.25)$ shared across all settings.}
\label{tab:appendix_global_hp}
\begin{tabular}{lcccc}
    \toprule
    \textbf{Method} & \textbf{LLaVA} & \textbf{Qwen} & \textbf{InternVL} & \textbf{Avg.} \\
    \midrule
    Random    & 60.16 & 66.95 & 69.00 & 65.37 \\
    SC        & 62.36 & 67.78 & 70.81 & 66.98 \\
    USC       & 60.28 & 68.11 & 71.58 & 66.66 \\
    CLIPScore & 61.03 & 67.45 & 70.25 & 66.24 \\
    VAUQ      & 62.13 & 68.17 & 71.53 & 67.28 \\
    \name{} (global HP)   & \textbf{63.09} & \textbf{69.91} & \textbf{72.27} & \textbf{68.43} \\
    \midrule
    \name{} (reported HP) & 63.67 & 69.91 & 72.27 & 68.62 \\
    \bottomrule
\end{tabular}
\end{table*}

%% file: Tables/hyperparameter_sensitivity.tex
\begin{table*}[t]
\centering
\small
\caption{\textbf{Hyperparameter sensitivity over a broad range.}
Each entry reports $N=5$\,/\,$N=25$ over $\alpha \in \{0.25,0.50,\ldots,1.75,2.00,3,4,5,6,7,8\}$ and $\lambda \in \{0.25,0.50,\ldots,2.75,3.00\}$.
In the mean column, \underline{underlined} values indicate that the range mean exceeds all competing baselines for the corresponding setup, and \textbf{bold} values indicate that the range mean exceeds the reported \name{} value.}
\label{tab:appendix_hp_range}
\resizebox{\linewidth}{!}{
\begin{tabular}{llcccc}
    \toprule
    \textbf{Model} & \textbf{Benchmark} & \textbf{Min} & \textbf{Max} & \textbf{Mean $\pm$ Std.} & \textbf{Reported \name{}} \\
    \midrule
    \multirow{4}{*}{LLaVA-1.5-7B}
      & VQAv2          & 65.03\,/\,65.87 & 67.30\,/\,69.07 & 66.03 $\pm$ 0.46\,/\,\underline{67.49} $\pm$ 0.85 & 66.63\,/\,67.60 \\
      & CHAIR          & 72.99\,/\,73.36 & 74.17\,/\,74.99 & \underline{73.60} $\pm$ 0.27\,/\,\textbf{74.47} $\pm$ 0.28 & 74.03\,/\,74.43 \\
      & HallusionBench & 45.18\,/\,45.35 & 48.05\,/\,48.73 & 46.63 $\pm$ 0.61\,/\,\underline{46.49} $\pm$ 0.79 & 47.38\,/\,48.22 \\
      & AMBER          & 64.48\,/\,64.25 & 65.68\,/\,65.84 & 65.12 $\pm$ 0.27\,/\,65.06 $\pm$ 0.41 & 65.33\,/\,65.74 \\
    \midrule
    \multirow{4}{*}{Qwen2.5-VL-7B}
      & VQAv2          & 66.57\,/\,63.80 & 68.93\,/\,69.10 & \underline{67.82} $\pm$ 0.55\,/\,\underline{67.40} $\pm$ 0.98 & 68.23\,/\,67.47 \\
      & CHAIR          & 73.72\,/\,71.52 & 75.63\,/\,75.95 & 75.09 $\pm$ 0.38\,/\,73.91 $\pm$ 1.14 & 75.42\,/\,75.28 \\
      & HallusionBench & 56.18\,/\,58.71 & 60.91\,/\,62.77 & \underline{59.03} $\pm$ 0.93\,/\,\underline{60.46} $\pm$ 0.82 & 60.07\,/\,61.93 \\
      & AMBER          & 74.46\,/\,73.56 & 75.26\,/\,76.16 & 74.96 $\pm$ 0.17\,/\,\underline{75.05} $\pm$ 0.61 & 74.98\,/\,75.92 \\
    \midrule
    \multirow{4}{*}{InternVL3-8B}
      & VQAv2          & 66.70\,/\,66.63 & 70.97\,/\,73.43 & 69.29 $\pm$ 0.93\,/\,\underline{71.14} $\pm$ 1.63 & 70.33\,/\,72.57 \\
      & CHAIR          & 78.04\,/\,78.04 & 79.40\,/\,78.56 & \underline{78.74} $\pm$ 0.37\,/\,\textbf{78.31} $\pm$ 0.11 & 79.37\,/\,78.22 \\
      & HallusionBench & 55.16\,/\,54.82 & 57.36\,/\,58.71 & 56.15 $\pm$ 0.49\,/\,56.34 $\pm$ 0.90 & 57.02\,/\,56.85 \\
      & AMBER          & 80.48\,/\,80.13 & 81.87\,/\,82.22 & 81.49 $\pm$ 0.26\,/\,81.65 $\pm$ 0.40 & 81.77\,/\,82.07 \\
    \bottomrule
\end{tabular}
}
\end{table*}

%% file: Tables/multi_seed_results.tex
\begin{table*}[t]
\centering
\small
\setlength{\tabcolsep}{3pt}
\caption{\textbf{Multi-seed results.} Best-of-$N$ selection performance at $N=5$, reported as mean $\pm$ standard deviation over seeds $\{42,123,456\}$.}
\label{tab:appendix_seed}
\begin{tabular}{lcccc}
    \toprule
    \multirow{2}{*}[-0.5ex]{\textbf{Method}}
    & \multicolumn{2}{c}{\textbf{LLaVA-1.5-7B}}
    & \multicolumn{2}{c}{\textbf{Qwen2.5-VL-7B}} \\
    \cmidrule(lr){2-3}\cmidrule(lr){4-5}
    & VQAv2 & CHAIR & VQAv2 & CHAIR \\
    \midrule
    Random    & 62.80 $\pm$ 0.27 & 74.49 $\pm$ 3.73 & 64.56 $\pm$ 1.01 & 74.36 $\pm$ 0.45 \\
    SC        & 66.61 $\pm$ 0.98 & 75.66 $\pm$ 2.52 & 66.86 $\pm$ 0.76 & 75.05 $\pm$ 1.16 \\
    USC       & 64.93 $\pm$ 1.00 & 75.15 $\pm$ 3.54 & 67.99 $\pm$ 1.10 & 75.20 $\pm$ 0.59 \\
    CLIPScore & 63.72 $\pm$ 1.16 & 75.57 $\pm$ 3.20 & 64.65 $\pm$ 0.66 & 74.82 $\pm$ 0.31 \\
    VAUQ      & 66.14 $\pm$ 0.16 & 75.91 $\pm$ 2.98 & 67.00 $\pm$ 0.42 & 75.53 $\pm$ 0.92 \\
    \name{}   & \textbf{66.80 $\pm$ 0.67} & \textbf{76.38 $\pm$ 2.03} & \textbf{68.28 $\pm$ 0.05} & \textbf{76.09 $\pm$ 0.97} \\
    \bottomrule
\end{tabular}
\end{table*}

%% file: Tables/lvlm_judge.tex
\begin{table*}[t]
\centering
\small
\setlength{\tabcolsep}{4pt}
\caption{\textbf{Pairwise LVLM-as-a-judge comparison.} Best-of-$N$ selection performance with LLaVA-1.5-7B at $N=5$ across four benchmarks.}
\label{tab:appendix_judge}
\begin{tabular}{lccccc}
    \toprule
    \textbf{Method} & \textbf{VQAv2} & \textbf{CHAIR} & \textbf{HallusionBench} & \textbf{AMBER} & \textbf{Avg.} \\
    \midrule
    Random          & 62.57 & 70.20 & 44.84 & 63.90 & 60.38 \\
    USC             & 63.78 & 71.07 & 43.09 & 64.61 & 60.64 \\
    LVLM-as-a-Judge & 64.90 & 70.58 & 44.67 & 64.67 & 61.21 \\
    \name{}         & \textbf{66.63} & \textbf{74.03} & \textbf{47.38} & \textbf{65.33} & \textbf{63.34} \\
    \bottomrule
\end{tabular}
\end{table*}

%% file: Tables/human_evaluation.tex
\begin{table*}[h]
\centering
\small
\caption{\textbf{Human evaluation on CHAIR disagreement cases.}
Preference between \name{}- and SC-selected responses on 50 sampled disagreement cases with LLaVA-1.5-7B ($N=25$), collected from six annotators for 300 judgments in total. Automatic denotes the CHAIR-F1 verdict on the same samples.}
\label{tab:appendix_human}
\begin{tabular}{lccc}
    \toprule
    \textbf{Evaluation} & \textbf{\name{}} & \textbf{Tie} & \textbf{SC} \\
    \midrule
    Automatic (CHAIR-F1) & 13 (26.0\%)  & 24 (48.0\%) & 13 (26.0\%) \\
    Human (per judgment) & \textbf{151 (50.3\%)} & 44 (14.7\%) & 105 (35.0\%) \\
    Human (per sample) & \textbf{27 (54.0\%)} & 9 (18.0\%) & 14 (28.0\%) \\
    \bottomrule
\end{tabular}
\end{table*}

%% file: Figures/human_evaluation_instructions.tex
\begin{figure*}[!t]
\centering

\begin{tcolorbox}[
    width=0.96\textwidth,
    colback=white,
    colframe=black!70,
    boxrule=0.8pt,
    arc=3mm,
    left=4mm,
    right=4mm,
    top=3mm,
    bottom=3mm
]
\small\ttfamily

You will see 25 pairs of an image, a question, and two
model-generated image descriptions, labeled A1 and A2.

\medskip

Your task is to choose one response based on the following criterion:

``Which response includes more accurate visual elements or
descriptions of the given image''

\medskip

If both descriptions contain some incorrect or hallucinated
details, do not automatically choose Tie. Instead, choose the
description that is overall more visually faithful.

\medskip
\medskip

\textbf{Choose}:

A1, if A1 is more visually faithful than A2.

A2, if A2 is more visually faithful than A1.

Tie, if both descriptions are similarly faithful or similarly
unfaithful.

\medskip
\medskip

The purpose of this study is to collect data for academic and
research analysis only.

\medskip

Your participation is anonymous, and all responses will be used
exclusively for research purposes. The collected data will be
handled responsibly and will not be misused.

\end{tcolorbox}

\caption{
\textbf{Instructions provided to participants for the human evaluation.} Participants were asked to compare two model-generated descriptions and select the more visually faithful one to the given image, with a tie option.
}
\label{fig:human_eval_instruction}
\end{figure*}

%% file: Figures/qualitative_1.tex
\begin{figure*}[t]
    \includegraphics[width=\linewidth]{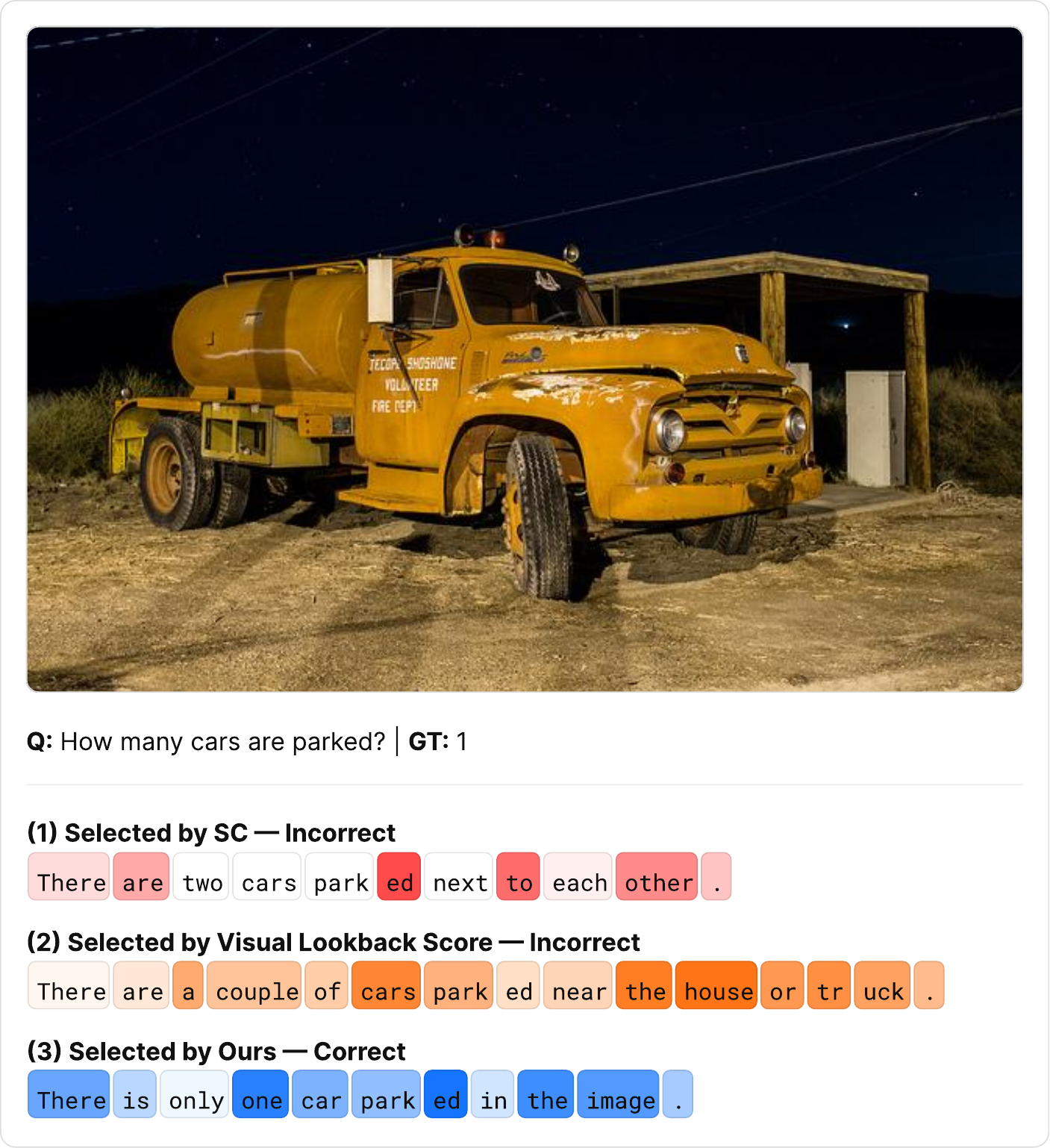}
    \caption{
    \textbf{Token-level comparison of response scorers.}
    SC favors a fluent but incorrect response, and Visual Lookback Score highlights visually referential tokens but remains insufficient on its own.
    \name{} combines confidence and visual lookback to select the visually grounded response.
    Darker highlights indicate higher token-level scores.
    }
    \label{fig:appendix_quali_1}
\end{figure*}

%% file: Figures/qualitative_2.tex
\begin{figure*}[t]
    \includegraphics[width=\linewidth]{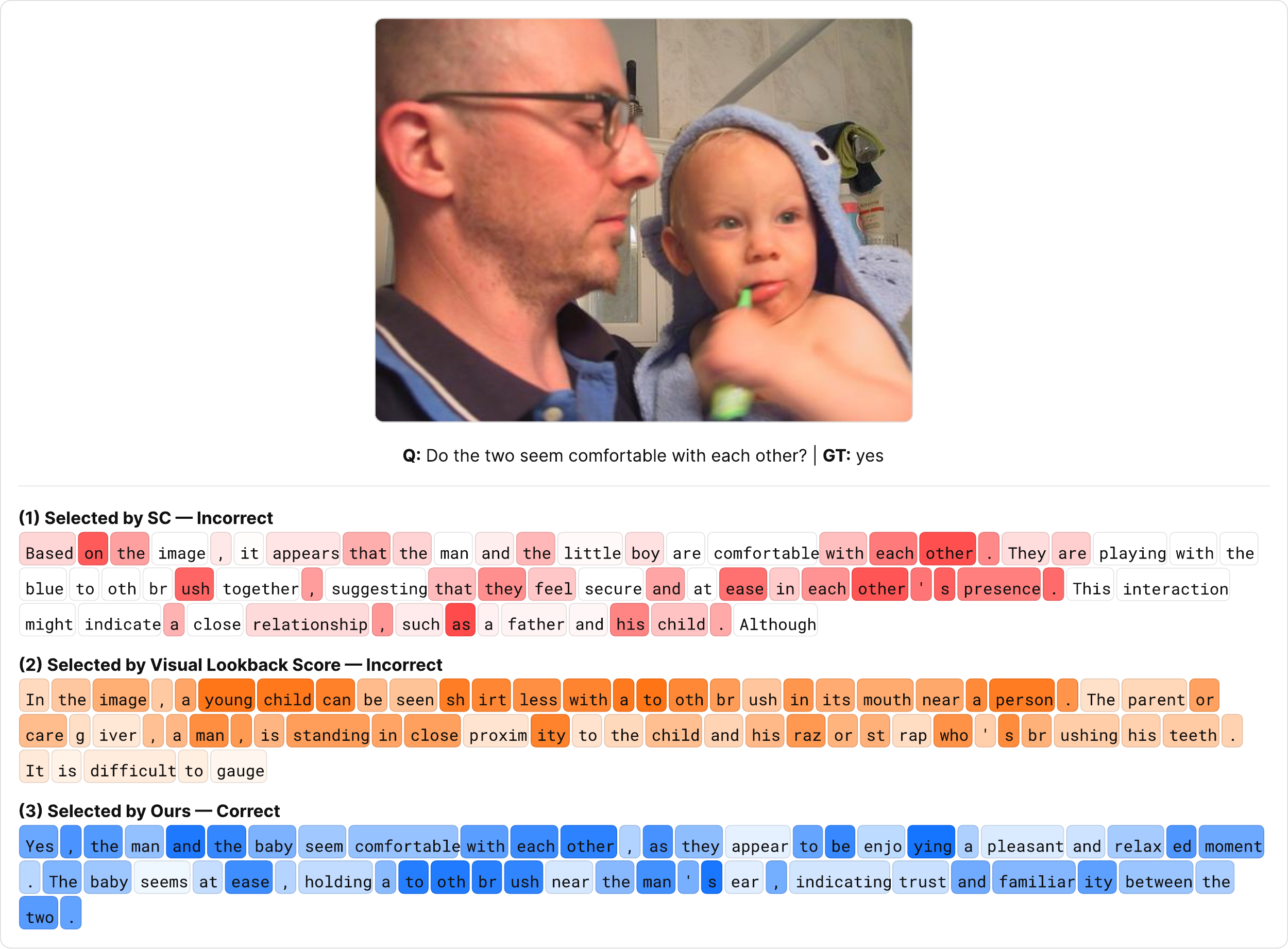}
    \caption{
    \textbf{Token-level behavior on a visually complex example.}
    Visual lookback Score broadly highlights image-referential tokens, whereas SC places large scores on a few confident tokens.
    \name{} instead emphasizes question-relevant visual evidence, selecting the correct response.
    Darker highlights indicate higher token-level scores.
    }
    \label{fig:appendix_quali_5}
\end{figure*}

%% file: Figures/qualitative_failure.tex
\begin{figure*}[t]
\centering
\includegraphics[width=\linewidth]{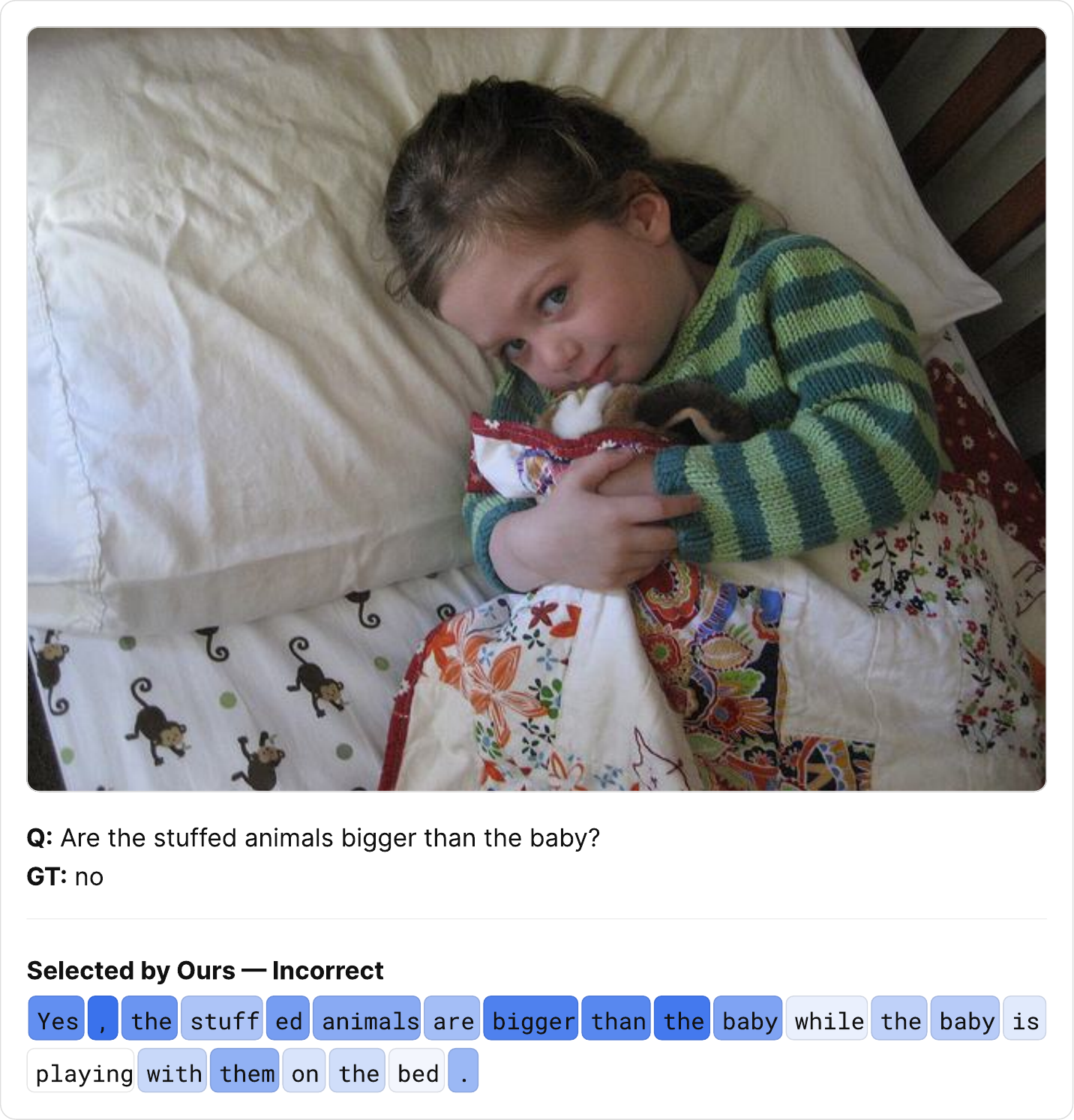}
\caption{{\textbf{Failure case of \name{}}.
The response selected by \name{} attends strongly to both entities being compared and to the comparative expression, but still reaches the wrong conclusion. This shows that a high visual lookback score indicates that the model consulted the image during generation, not that it correctly understood the relationship between the referenced entities.
Darker highlights indicate higher token-level \name{} scores.}}
\label{fig:appendix_quali_failure}
\end{figure*}